\definecolor{darkblue}{rgb}{0,0.08,0.8}
\newtheorem{theorem}{Theorem}[section]
\newtheorem*{theorem*}{Theorem}
\newtheorem{proposition}[theorem]{Proposition}
\newtheorem*{proposition*}{Proposition}
\newtheorem{definition}[theorem]{Definition}
\renewcommand{\hat}{\widehat}
\renewcommand{\tilde}{\widetilde}
\newcommand{\argmax}{\mathop{\rm arg~max}\limits}
\newcommand{\argmin}{\mathop{\rm arg~min}\limits}
\newcommand{\A}{\mathrm{a}}
\newcommand{\C}{\mathrm{c}}
\newcommand{\U}{\mathrm{u}}
\newcommand{\E}{\mathbb{E}}
\newcommand{\R}{\mathbb{R}}
\newcommand{\D}{\mathcal{D}}
\newcolumntype{C}{>{$}c<{$}}
\newcolumntype{L}{>{$}l<{$}}
\begin{document}
\title{Imitation Learning from Imperfect Demonstration}
\author{Yueh-Hua Wu$^{1,2}$, Nontawat Charoenphakdee$^{2,3}$, Han Bao$^{2,3}$, \\Voot Tangkaratt$^{2}$, Masashi Sugiyama$^{3,2}$}
\date{
$^1$ National Taiwan University\\ 
\vspace{0.03in}
$^2$ RIKEN Center for Advanced Intelligence Project\\
\vspace{0.05in}
$^3$ The University of Tokyo
}
\maketitle

\begin{abstract}
Imitation learning (IL) aims to learn an optimal policy from demonstrations.
However, such demonstrations are often imperfect since collecting optimal ones is costly.
To effectively learn from imperfect demonstrations, we propose a novel approach that utilizes \emph{confidence scores}, which describe the quality of demonstrations.
More specifically, we propose two confidence-based IL methods, namely two-step importance weighting IL~(2IWIL) and generative adversarial IL with imperfect demonstration and confidence~(IC-GAIL). 
We show that confidence scores given only to a small portion of sub-optimal demonstrations significantly improve the performance of IL both theoretically and empirically.
 
\end{abstract}

\section{Introduction}
Imitation learning (IL) has become of great interest because obtaining demonstrations is usually easier than designing reward. 
Reward is a signal to instruct agents to complete the desired tasks. 
However, ill-designed reward functions usually lead to unexpected behaviors~\citep{everitt2016avoiding,dewey2014reinforcement,amodei2016concrete}. 
There are two main approaches that can be used to solve IL: behavioral cloning (BC)~\citep{schaal1999imitation}, which adopts supervised learning approaches to learn an action predictor that is trained directly from demonstration data; and apprenticeship learning (AL), which attempts to find a policy that is better than the expert policy for a class of cost functions~\citep{abbeel2004apprenticeship}. 
Even though BC can be trained with supervised learning approaches directly, it has been shown that BC cannot imitate the expert policy without a large amount of demonstration data for not considering the transition of environments~\citep{ross2011reduction}. 
In contrast, AL approaches learn from interacting with environments and optimize objectives such as maximum entropy~\citep{ziebart2008maximum}.

A state-of-the-art approach \emph{generative adversarial imitation learning} (GAIL) is proposed by \citet{ho2016generative}. 
The method learns an optimal policy by performing occupancy measure matching~\citep{syed2008apprenticeship}. 
An advantage of the matching method is that it is robust to demonstrations generated from a stochastic policy.
Based on the concept proposed in GAIL, variants have been developed recently for different problem settings~\citep{li2017infogail,kostrikov2018addressing}. 

Despite that GAIL is able to learn an optimal policy from optimal demonstrations, to apply IL approaches to solve real-world tasks, the difficulty in obtaining such demonstration data should be taken into consideration. 
However, demonstrations from an optimal policy (either deterministic or stochastic) are usually assumed to be available in the above mentioned works, which can be barely fulfilled by the fact that most of the accessible demonstrations are imperfect or even from different policies. 
For instance, to train an agent to play basketball with game-play videos of the National Basketball Association, we should be aware that there are 14.3 turnovers per game\footnote{\url{https://www.basketball-reference.com/leagues/NBA_stats.html}}, not to mention other kinds of mistakes that may not be recorded. 
The reason why optimal demonstrations are hard to obtain can be attributed to the limited attention and the presence of distractions, which make humans hard to follow optimal policies all the time. As a result, some parts of the demonstrations may be optimal and the others are not.

To mitigate the above problem, we propose to use confidence scores, which indicate the probability that whether a given trajectory is optimal.  
In practice, obtaining confidence scores can be cheaper than collecting optimal demonstrations.
It is because it requires merely the knowledge of the optimal behavior to score but performing optimally requires not only such knowledge but also strict physical conditions. For instance, to play basketball well, the capabilities of making spontaneous decisions and intrinsic fingertip control are required. 
Therefore, for real-world tasks, the confidence labelers are not necessarily expert at achieving the goal. They can be normal enthusiasts such as audiences of basketball games. 

To further reduce the additional cost to learn an optimal policy, we consider a more realistic setting that the given demonstrations are partially equipped with confidence. As a result, the goal of this work is to utilize imperfect demonstrations where some are equipped with confidence while some are not (we refer to demonstrations without confidence as ``unlabeled demonstrations'').

In this work, we consider the setting where the given imperfect demonstrations are a mixture of optimal and non-optimal demonstrations. 
The setting is common when the demonstrations are collected via crowdsourcing \citep{serban2017deep,hu2018inference,shah2018bootstrapping} and learning from different sources such as videos \citep{tokmakov2017learning,pathak2017learning,supancic2017tracking,yeung2017learning,liu2018imitation}, where demonstrations can be generated from different policies. 

We propose two methods, \emph{two-step importance weighting imitation learning} (2IWIL) and \emph{generative adversarial imitation learning with imperfect demonstration and confidence} (IC-GAIL), based on the idea of reweighting but from different perspectives. 
To utilize both confidence and unlabeled data, for 2IWIL, it predicts confidence scores for unlabeled data by optimizing the proposed objective based on empirical risk minimization (ERM)~\citep{vapnik1998statistical}, which has flexibility for different loss functions, models, and optimizers; on the other hand, instead of directly reweighting to the optimal distribution and perform GAIL with reweighting, IC-GAIL reweights to the \emph{non-optimal} distribution and match the optimal occupancy measure based on our mixture distribution setting.
Since the derived objective of IC-GAIL depends on the proportion of the optimal demonstration in the demonstration mixture, we empirically show that IC-GAIL converges slower than 2IWIL but achieves better performance, which forms a trade-off between the two methods. We show that the proposed methods are both theoretically and practically sound.

\section{Related work}\label{relatedwork}
In this section, we provide a brief survey about making use of non-optimal demonstrations and semi-supervised classification with confidence data.

\subsection{Learning from non-optimal demonstrations}
Learning from non-optimal demonstrations is nothing new in IL and reinforcement learning (RL) literature, but previous works utilized different information to learn a better policy. \emph{Distance minimization inverse RL} (DM-IRL) \citep{burchfiel2016distance} utilized a feature function of states and assumed that the true reward function is linear in the features. The feedback from human is an estimate of accumulated reward, which is harder to be given than confidence because multiple reward functions may correspond to the same optimal policy.

\emph{Semi-supervised IRL} (SSIRL)~\citep{valko2012semi} extends the IRL method proposed by \citet{abbeel2004apprenticeship}, where the reward function can be learned by matching the \emph{feature expectations} of the optimal demonstrations. The difference from \citet{abbeel2004apprenticeship} is that in SSIRL, optimal and sub-optimal trajectories from other performers are given. Transductive SVM~\citep{soentpiet1999advances} was used in place of vanilla SVM in \citet{abbeel2004apprenticeship} to recognize optimal trajectories in the sub-optimal ones. In our setting, the confidence scores are given instead of the optimal demonstrations. DM-IRL and SSIRL are not suitable for high-dimensional problems due to its dependence on the linearity of reward functions and good feature engineering. 

\subsection{Semi-supervised classification with confidence data}
In our 2IWIL method, we train a probabilistic classifier with confidence and unlabeled data by optimizing the proposed ERM objective. There are similar settings such as \emph{semi-supervised classification} \citep{chapelle2009semi}, where few hard-labeled data $y\in\{0,1\}$ and some unlabeled data are given.

\citet{zhou2014soft} proposed to use hard-labeled instances to estimate confidence scores for unlabeled samples using Gaussian mixture models and principal component analysis. Similarly, for an input instance $x$, \citet{wang2013weakly} obtained an upper bound of confidence $\Pr(y=1\vert x)$ with hard-labeled instances and a kernel density estimator, then treated the upper bound as an estimate of probabilistic class labels. 

Another related scheme was considered in \citet{el2010semi} where they considered soft labels $z\in[0,1]$ as fuzzy inputs and proposed a classification approach based on k-nearest neighbors. This method is difficult to scale to high-dimensional tasks, and lacks theoretical guarantees.
\citet{ishida2018binary} proposed another scheme that trains a classifier only from positive data equipped with confidence.
Our proposed method, 2IWIL, also considers training a classifier with confidence scores of given demonstrations.
Nevertheless, 2IWIL can train a classifier from fewer confidence data, with the aid of a large number of unlabeled data.

\section{Background}\label{background}
In this section, we provide backgrounds of RL and GAIL.

\subsection{Reinforcement Learning}
We consider the standard Markov Decision Process (MDP) \citep{sutton1998introduction}. MDP is represented by a tuple $\langle S,\mathcal{A},\mathcal{P}, \mathcal{R},\gamma\rangle$, where $S$ is the state space, $\mathcal{A}$ is the action space, $\mathcal{P}(s_{t+1}\vert s_t,a_t)$ is the transition density of state $s_{t+1}$ at time step $t+1$ given action $a_t$ made under state $s_t$ at time step $t$, $\mathcal{R}(s,a)$ is the reward function, and $\gamma\in(0,1)$ is the discount factor.

A stochastic policy $\pi(a\vert s)$ is a density of action~$a$ given state~$s$. The performance of $\pi$ is evaluated in the $\gamma$-discounted infinite horizon setting and its expectation can be represented with respect to the trajectories generated by~$\pi$:
\begin{align}\label{eq:reward_sum}
    \mathbb{E}_\pi[\mathcal{R}(s,a)]=\mathbb{E}\left[\sum_{t=0}^\infty \gamma^t\mathcal{R}(s_t,a_t)\right],
\end{align}
where the expectation on the right-hand side is taken over the densities $p_0(s_0)$, $\mathcal{P}(s_{t+1}|s_t,a_t)$, and $\pi(a_t|s_t)$ for all time steps $t$.
Reinforcement learning algorithms~\citep{sutton1998introduction} aim to maximize Eq.~\eqref{eq:reward_sum} with respect to $\pi$.

To characterize the distribution of state-action pairs generated by an arbitrary policy $\pi$, the occupancy measure is defined as follows.
\begin{definition}[\citet{Puterman:1994:MDP:528623}]
Define occupancy measure $\rho_\pi:S\times \mathcal{A}\rightarrow \mathbb{R}$,
\begin{align}
    \rho_\pi(s,a)=\pi(a\vert s)\sum_{t=0}^\infty \gamma^t\Pr(s_t=s\vert \pi),
\end{align}
where $\Pr(s_t=s\vert \pi)$ is the probability density of state $s$ at time step $t$ following policy $\pi$.
\end{definition}
The occupancy measure of $\pi$, $\rho_\pi(s,a)$, can be interpreted as an unnormalized density of state-action pairs. The occupancy measure plays an important role in IL literature because of the following one-to-one correspondence with the policy.
\begin{theorem}\label{theorem:onetoone}
(Theorem 2 of \citet{syed2008apprenticeship}) Suppose $\rho$ is the occupancy measure for $\pi_{\rho}(a\vert s)\triangleq\frac{\rho(s,a)}{\sum_{a'}\rho(s,a')}$. Then $\pi_\rho$ is the only policy whose occupancy measure is $\rho$.
\end{theorem}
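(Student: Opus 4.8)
The plan is to prove uniqueness through the factorization $\rho_\pi(s,a)=\pi(a\vert s)\,d_\pi(s)$, where $d_\pi(s):=\sum_{a'}\rho_\pi(s,a')$ is the discounted state-visitation measure, and then to pin down $d_\pi$ as the unique solution of a linear fixed-point (Bellman flow) equation using $\gamma\in(0,1)$.

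First I would record the factorization. Summing the definition of the occupancy measure over $a$ gives
\begin{align}
d_\pi(s)=\sum_{a'}\rho_\pi(s,a')=\sum_{t=0}^\infty\gamma^t\Pr(s_t=s\vert\pi),
\end{align}
so that $\rho_\pi(s,a)=\pi(a\vert s)\,d_\pi(s)$. Consequently, on every state with $d_\pi(s)>0$ we have $\pi(a\vert s)=\rho_\pi(s,a)/\sum_{a'}\rho_\pi(s,a')$, which is exactly the normalization defining $\pi_\rho$. Hence any policy whose occupancy measure equals $\rho$ must agree with $\pi_\rho$ on all visited states; on states that are never visited ($d_\pi(s)=0$) the policy contributes nothing to $\rho$, so it is immaterial. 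This already yields uniqueness up to behavior on unreachable states.

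Second, to close the loop and verify that $\pi_\rho$ reproduces $\rho$, I would use the Bellman flow equation satisfied by any state-visitation measure,
\begin{align}
d_\pi(s)=p_0(s)+\gamma\sum_{s',a'}\mathcal{P}(s\vert s',a')\,\pi(a'\vert s')\,d_\pi(s'),
\end{align}
which follows by isolating the $t=0$ term and applying the one-step recursion $\Pr(s_{t+1}=s\vert\pi)=\sum_{s',a'}\mathcal{P}(s\vert s',a')\pi(a'\vert s')\Pr(s_t=s'\vert\pi)$ before summing the discounted series. Substituting $\pi(a'\vert s')\,d_\pi(s')=\rho(s',a')$ rewrites the right-hand side in terms of $\rho$ alone, so $\pi$ and $\pi_\rho$ induce the identical flow equation. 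Viewing this equation as $(I-\gamma\,\mathcal{T}^\top)d=p_0$ with $\mathcal{T}$ the row-stochastic state-to-state kernel induced by the policy, the factor $\gamma\in(0,1)$ makes $I-\gamma\,\mathcal{T}^\top$ invertible (Neumann series), so its solution $d$ is unique. Therefore $d_{\pi_\rho}=d_\pi$ and $\rho_{\pi_\rho}(s,a)=\pi_\rho(a\vert s)\,d_{\pi_\rho}(s)=\rho(s,a)$.

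The main obstacle I anticipate is the careful treatment of states with zero occupancy, where $\pi_\rho$ is a $0/0$ expression and the generating policy is genuinely unconstrained; making the statement ``the only policy'' precise requires either fixing a convention on such states or restricting the claim to reachable states. The remaining technical point is justifying the invertibility/contraction step --- and, in continuous $S\times\mathcal{A}$, reading the sums as integrals --- but once the flow operator is seen to be a $\gamma$-contraction, uniqueness of $d_\pi$, and hence of the generating policy, follows immediately.
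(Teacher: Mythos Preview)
The paper does not prove this theorem; it merely cites it as Theorem~2 of \citet{syed2008apprenticeship} and uses it as background. Your argument is essentially the standard one from that source: factor $\rho_\pi(s,a)=\pi(a\vert s)\,d_\pi(s)$ to read off $\pi$ on visited states, and use the Bellman flow equation together with the $\gamma$-contraction to pin down $d_\pi$ and confirm $\rho_{\pi_\rho}=\rho$. This is correct, and your caveat about states with $d_\pi(s)=0$ is exactly the point where the phrase ``the only policy'' must be qualified---the original result in \citet{syed2008apprenticeship} implicitly identifies policies that differ only on unreachable states, and you have flagged this appropriately. One small remark: the theorem as stated already \emph{assumes} that $\rho$ is the occupancy measure of $\pi_\rho$, so strictly speaking only the uniqueness direction is required; your second step (recovering $\rho$ from $\pi_\rho$ via the flow equation) is a bonus that proves the stronger bijection claim, which is what the surrounding text actually uses.
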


In this work, we also define the \emph{normalized occupancy measure} $p(s,a)$,
\begin{align*}
    p(s,a)\triangleq&\frac{\rho(s,a)}{\sum_{s,a}\rho(s,a)}\\
    =&\frac{\rho(s,a)}{\sum_{s,a}\pi(a\vert s)\sum_{t=0}^\infty \gamma^t\Pr(s_t=s\vert \pi)}\\
    =&\frac{\rho(s,a)}{\sum_{t=0}^\infty \gamma^t}=(1-\gamma)\rho(s,a).
\end{align*}
The normalized occupancy measure can be interpreted as a probability density of state-action pairs that an agent experiences in the environment with policy $\pi$. 

\subsection{Generative adversarial imitation learning (GAIL)}
The problem setting of IL is that given trajectories $\{(s_i,a_i)\}_{i=1}^n$ generated by an expert $\pi_\mathrm{E}$, we are interested in optimizing the agent policy $\pi_\theta$ to recover the expert policy $\pi_\mathrm{E}$ with $\{(s_i,a_i)\}_{i=1}^n$ and the MDP tuple without reward function $\mathcal{R}$. 

GAIL~\citep{ho2016generative} is a state-of-the-art IL method that performs occupancy measure matching to learn a parameterized policy. Occupancy measure matching aims to minimize the objective $d(\rho_{\pi_\mathrm{E}},\rho_{\pi_\theta})$, where $d$ is a distance function. The key idea behind GAIL is that it uses generative adversarial training to estimate the distance and minimize it alternatively. To be precise, the distance is the Jensen-Shannon divergence (JSD), which is estimated by solving a binary classification problem. This leads to the following min-max optimization problem:
\begin{align}\label{eq:gail}
    \min_\theta\max_w
    \mathbb{E}_{s,a\sim p_\theta}[\log D_w(s,a)]+\mathbb{E}_{s,a\sim p_\mathrm{opt}}[\log (1-D_w(s,a))],
\end{align}
where $p_\theta$ and $p_\mathrm{opt}$ are the corresponding normalized occupancy measures for $\pi_\theta$ and $\pi_\mathrm{opt}$ respectively. $D_w$ is called a discriminator and it can be shown that if the discriminator has infinite capacity, the global optimum of Eq.~\eqref{eq:gail} corresponds to the JSD up to a constant \citep{goodfellow2014generative}. To update the agent policy $\pi_\theta$, GAIL treats the loss $-\log(D_w(s,a))$ as a reward signal and the agent can be updated with RL methods such as trust region policy optimization (TRPO)~\citep{schulman2015trust}. A weakness of GAIL is that if the given demonstrations are non-optimal then the learned policy will be non-optimal as well.

\section{Imitation learning with confidence and unlabeled data}\label{methods}
In this section, we present two approaches to learning from imperfect demonstrations with confidence and unlabeled data. 
The first approach is \emph{2IWIL}, which aims to learn a probabilistic classifier to predict confidence scores of unlabeled demonstration data and then performs standard GAIL with reweighted distribution.
The second approach is \emph{IC-GAIL}, which forgoes learning a classifier and learns an optimal policy by performing occupancy measure matching with unlabeled demonstration data.
Details of derivation and proofs in this section can be found in Appendix.

\subsection{Problem setting}
Firstly, we formalize the problem setting considered in this paper.
For conciseness, in what follows we use $x$ in place of $(s,a)$.
Consider the case where given imperfect demonstrations are sampled an optimal policy $\pi_\mathrm{opt}$ and non-optimal policies $\Pi=\{\pi_i\}_{i=1}^n$.
Denote that the corresponding normalized occupancy measure of $\pi_\mathrm{opt}$ and $\Pi$ are $p_\mathrm{opt}$ and $\{p_i\}_{i=1}^n$, respectively.
The normalized occupancy measure $p(x)$ of a state-action pair $x$ is therefore the weighted sum of $p_\mathrm{opt}$ and $\{p_i\}_{i=1}^n$,
\begin{align*}
    p(x)=&\alpha p_\mathrm{opt}(x)+\sum_{i=1}^{n} \nu_ip_i(x)
    \\=&\alpha p_\mathrm{opt}(x)+(1-\alpha)p_\mathrm{non}(x),
\end{align*}
where $\alpha + \sum_{i=1}^n\nu_i=1$ and $p_\mathrm{non}(x)=\frac{1}{(1-\alpha)}\sum_{i=1}^n\nu_ip_i(x)$.
We may further follow traditional classification notation by defining $p_\mathrm{opt}(x)\triangleq p(x\vert y=+1)$ and $p_\mathrm{non}(x)\triangleq p(x\vert y=-1)$, where $y=+1$ indicates that $x$ is drawn from the occupancy measure of the optimal policy and $y=-1$ indicates the non-optimal policies.
Here, $\alpha = \Pr(y=+1)$ is the class-prior probability of the optimal policy.
We further assume that an oracle labels state-action pairs in the demonstration data with \emph{confidence scores} $r(x)\triangleq p(y=+1|x)$. Based on this, the normalized occupancy measure of the optimal policy can be expressed by the Bayes' rule as
\begin{align}\label{eq:bayes}
    p(x\vert y=+1)=&\frac{r(x)p(x)}{\alpha}.
\end{align}
We assume that labeling state-action pairs by the oracle can be costly and only some pairs are labeled with confidence. More precisely, we obtain demonstration datasets as follows,
\begin{align*}
    &\mathcal{D}_\C\triangleq\{(x_{\C,i},r_i)\}_{i=1}^{n_\C}\overset{\mathrm{i.i.d.}}{\sim} q(x,r),\\
    &\mathcal{D}_\U\triangleq\{x_{\U,i}\}_{i=1}^{n_\U}\overset{\mathrm{i.i.d.}}{\sim} p(x),
\end{align*}
where $q(x,r)=p(x)p_\mathrm{r}(r\vert x)$ and $p_\mathrm{r}(r_i\vert x)=\delta(r_i-r(x))$ is a delta distribution.
Our goal is to consider the case where $\mathcal{D}_c$ is scarce and we want to learn the optimal policy $\pi_\mathrm{opt}$ with $\mathcal{D}_c$ and $\mathcal{D}_u$ jointly.

\subsection{Two-step importance weighting imitation learning}
We first propose an approach based on the importance sampling scheme.
By Eq.~\eqref{eq:bayes}, the GAIL objective in Eq.~\eqref{eq:gail} can be rewritten as follows:
\begin{align}\label{eq:gail_reweight}
    \min_\theta \max_w\;&
    \mathbb{E}_{x\sim p_\theta}\left[\log D_w(x)\right]+\mathbb{E}_{x,r\sim q}\left[\frac{r}{\alpha}\log(1-D_w(x))\right].
\end{align}
In practice, we may use the mean of confidence scores to estimate the class prior $\alpha$. Although we can reweight the confidence data $\mathcal{D}_\C$ to match the optimal distribution, 
we have a limited number of confidence data and it is difficult to perform accurate sample estimation.
To make full use of unlabeled data, the key idea is to identify confidence scores of the given unlabeled data $\mathcal{D}_\U$ and reweight both confidence data and unlabeled data.
To achieve this, we train a probabilistic classifier from confidence data and unlabeled data, where we call this learning problem \emph{semi-conf (SC) classification}. 




Let us first consider a standard binary classification problem to classify samples into $p_\mathrm{opt}$ ($y=+1$) and $p_\mathrm{non}$ ($y=-1$).
Let $g\colon\R^d \to \R$ be a prediction function and $\ell\colon \R \to \R_+$ be a loss function. The optimal classifier can be learned by minimizing the following risk:
\begin{align}\label{eq:pnrisk}
R_{\mathrm{PN},\ell}(g) &= \alpha \mathbb{E}_{x\sim p_\mathrm{opt}} \left[ \ell(g(x)) \right]+ (1-\alpha) \mathbb{E}_{x\sim p_\mathrm{non}} \left[ \ell(-g(x)) \right] \text{,}
\end{align}
where PN stands for ``positive-negative''.
However, as we only have samples from the mixture distribution $p$ instead of samples separately drawn from $p_\mathrm{opt}$ and $p_\mathrm{non}$, it is not straightforward to conduct sample estimation of the risk in Eq.~\eqref{eq:pnrisk}.
To overcome this issue, we express the risk in an alternative way that can be estimated only from $\mathcal{D}_\C$ and $\mathcal{D}_\U$ in the following theorem.
\begin{theorem}
\label{lemma:unlabel}
The classification risk~\eqref{eq:pnrisk} can be equivalently expressed as 
\begin{align}\label{eq:weighted}
    R_{\mathrm{SC},\ell}(g)&= \mathbb{E}_{x,r\sim q}[r(\ell(g(x))-\ell(-g(x)))+(1-\beta)\ell(-g(x))]+\mathbb{E}_{x\sim p}[\beta \ell(-g(x))],
\end{align}
where $\beta\in[0,1]$ is an arbitrary weight.
\end{theorem}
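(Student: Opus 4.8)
The plan is to reduce both sides to a single expectation over the mixture $p$ and then verify an algebraic identity in which the free parameter $\beta$ drops out. First I would rewrite the positive-negative risk $R_{\mathrm{PN},\ell}$ using the Bayes relation in Eq.~\eqref{eq:bayes}. That equation gives $\alpha p_\mathrm{opt}(x) = r(x)p(x)$, and since $p(x) = \alpha p_\mathrm{opt}(x) + (1-\alpha)p_\mathrm{non}(x)$, subtraction yields $(1-\alpha)p_\mathrm{non}(x) = (1-r(x))p(x)$. Substituting these two identities into Eq.~\eqref{eq:pnrisk} collapses both class-conditional expectations into a single expectation over $p$:
\[
R_{\mathrm{PN},\ell}(g) = \mathbb{E}_{x\sim p}\bigl[r(x)\ell(g(x)) + (1-r(x))\ell(-g(x))\bigr].
\]

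Next I would handle the $q$-expectation on the right-hand side of Eq.~\eqref{eq:weighted}. Because $q(x,r) = p(x)p_\mathrm{r}(r\vert x)$ with $p_\mathrm{r}(r\vert x) = \delta(r - r(x))$, integrating out the delta gives $\mathbb{E}_{x,r\sim q}[f(x,r)] = \mathbb{E}_{x\sim p}[f(x,r(x))]$ for any integrand $f$. Applying this to the first term of $R_{\mathrm{SC},\ell}$ and merging it with the remaining $\mathbb{E}_{x\sim p}[\beta\ell(-g(x))]$ term produces a single expectation over $p$ whose integrand is $r(x)(\ell(g(x))-\ell(-g(x))) + (1-\beta)\ell(-g(x)) + \beta\ell(-g(x))$.

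Finally I would observe that the two $\beta$-weighted terms combine as $(1-\beta+\beta)\ell(-g(x)) = \ell(-g(x))$, so $\beta$ cancels identically and the integrand reduces to $r(x)\ell(g(x)) + (1-r(x))\ell(-g(x))$, which matches the expression for $R_{\mathrm{PN},\ell}$ derived in the first step. This establishes the equality for every $\beta\in[0,1]$ (in fact for any real $\beta$). The argument is essentially a chain of substitutions, so there is no deep obstacle; the only points needing care are deriving $(1-\alpha)p_\mathrm{non}(x) = (1-r(x))p(x)$ from the mixture decomposition rather than assuming it, and the clean reinterpretation of the $q$-expectation as a $p$-expectation under the delta distribution. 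The role of $\beta$ is not to alter the risk value but to redistribute weight between the labeled ($q$) and unlabeled ($p$) terms, which is precisely what makes the estimator usable with both $\mathcal{D}_\C$ and $\mathcal{D}_\U$.
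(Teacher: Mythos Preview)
Your proposal is correct and follows essentially the same route as the paper: both arguments use the Bayes identities $\alpha p_\mathrm{opt}(x)=r(x)p(x)$ and $(1-\alpha)p_\mathrm{non}(x)=(1-r(x))p(x)$ to collapse $R_{\mathrm{PN},\ell}$ into a single $p$-expectation, interpret the $q$-expectation via the delta $p_\mathrm{r}(r\vert x)=\delta(r-r(x))$, and then observe that the $\beta$ terms cancel. The only cosmetic difference is that the paper works forward by inserting $\beta\ell(-g(x))-\beta\ell(-g(x))$ and splitting, whereas you meet in the middle by reducing both sides to the common integrand; the logical content is identical.
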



Thus, we can obtain a probabilistic classifier by minimizing Eq.~\eqref{eq:weighted},
which can be estimated only with $\mathcal{D}_\C$ and $\mathcal{D}_\U$.
Once we obtain the prediction function $g$,
we can use it to give confidence scores for $\mathcal{D}_\U$.

To make the prediction function $g$ estimate confidence accurately, the loss function $\ell$ in Eq.~\eqref{eq:weighted} should come from a class of \emph{strictly proper composite loss}~\citep{buja2005loss,reid2010composite}.
Many losses such as the squared loss, logistic loss, and exponential loss are proper composite. 
For example, if we obtain $g_{\mathrm{log}}^*$ that minimizes a logistic loss $\ell_{\log}(z)=(\log(1+\exp(-z))$, we can obtain confidence scores by passing prediction outputs to a sigmoid function $\hat{p}(y=1|x)=[1+\exp(-g_{\log}^*(x))]^{-1}$~\citep{reid2010composite}.
On the other hand, the hinge loss cannot be applied since it is not a proper composite loss and cannot estimate confidence reliably~\citep{bartlett2007sparseness,reid2010composite}. 
Therefore, we can obtain a probabilistic classifier from the prediction function $g$ that learned from a strictly proper composite loss.
After obtaining a probabilistic classifier, we optimize the importance weighted objective in Eq.~\eqref{eq:gail_reweight},
where both $\mathcal{D}_\C$ and $\mathcal{D}_\U$ are used to estimate the second expectation.
We summarize this training procedure in Algorithm~\ref{alg:2iwil}.

Next, we discuss the choice of the combination coefficient~$\beta$.
Since we have access to the empirical unbiased estimator $\Hat{R}_{\mathrm{SC},\ell}(g)$ from Eq.~\eqref{eq:weighted}, it is natural to find the minimum variance estimator among them.
The following theorem gives the optimal $\beta$ in terms of the estimator variance.
\begin{proposition}[variance minimality]
\label{proposition:min_var}
Let $\sigma_\mathrm{cov}$ denote the covariance between $n_\C^{-1}\sum_{i=1}^{n_\C}r_i\{\ell(g(x_{\C,i}))-\ell(-g(x_{\C,i}))\}$ and $n_\C^{-1}\sum_{i=1}^{n_\C}\ell(-g(x_{\C,i}))$.
For a fixed $g$, the estimator $\hat{R}_{\mathrm{SC},\ell}(g)$
has the minimum variance when $\beta=\mathrm{clip}_{[0, 1]}(\frac{n_\U}{n_\C+n_\U}+\frac{\sigma_\mathrm{cov}}{\mathrm{Var}(\ell(-g(x)))}\frac{n_\C n_\U}{n_\C+n_\U})$.\footnote{$\mathrm{clip}_{[l, u]}(v) \triangleq \max\{l, \min\{v, u\}\}$.}
\end{proposition}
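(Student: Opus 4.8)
The plan is to write down the empirical unbiased estimator $\hat{R}_{\mathrm{SC},\ell}(g)$ explicitly, view its variance as a function of $\beta$, and minimize that function subject to the constraint $\beta\in[0,1]$. Because Theorem~\ref{lemma:unlabel} gives $R_{\mathrm{SC},\ell}(g)=R_{\mathrm{PN},\ell}(g)$ for \emph{every} $\beta$, the empirical version is unbiased for all $\beta$; hence optimizing over $\beta$ is a pure variance-reduction problem and no bias term can enter.

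First I would introduce shorthand for the three sample averages in the estimator. Writing $A_i=r_i(\ell(g(x_{\C,i}))-\ell(-g(x_{\C,i})))$ and $B_i=\ell(-g(x_{\C,i}))$ for the confidence data and $C_j=\ell(-g(x_{\U,j}))$ for the unlabeled data, and letting $\bar{A},\bar{B},\bar{C}$ be the corresponding empirical means, the estimator collapses to
\begin{align*}
\hat{R}_{\mathrm{SC},\ell}(g)=\bar{A}+\bar{B}+\beta(\bar{C}-\bar{B}).
\end{align*}
The two structural facts that drive the proof are: (i) $\mathcal{D}_\C$ and $\mathcal{D}_\U$ are independent, so $\bar{C}$ is independent of $(\bar{A},\bar{B})$; and (ii) the marginal of $x$ under $q$ is $p$, so $B$ and $C$ are identically distributed and $\mathrm{Var}(B)=\mathrm{Var}(C)=\mathrm{Var}(\ell(-g(x)))$, which is exactly the quantity that ends up in the denominator.

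Next I would expand the variance. Using the independence of the two datasets and the i.i.d.\ structure within each,
\begin{align*}
\mathrm{Var}(\hat{R}_{\mathrm{SC},\ell}(g))=\mathrm{Var}(\bar{A})+(1-\beta)^2\frac{\mathrm{Var}(\ell(-g(x)))}{n_\C}+2(1-\beta)\sigma_\mathrm{cov}+\beta^2\frac{\mathrm{Var}(\ell(-g(x)))}{n_\U},
\end{align*}
where $\sigma_\mathrm{cov}=\mathrm{Cov}(\bar{A},\bar{B})$ by its definition in the statement. This is a quadratic in $\beta$ with strictly positive leading coefficient $\mathrm{Var}(\ell(-g(x)))(n_\C^{-1}+n_\U^{-1})$, hence strictly convex. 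Setting the derivative in $\beta$ to zero, solving, and collecting terms over the common factor $n_\C n_\U/(n_\C+n_\U)$ yields the unconstrained minimizer
\begin{align*}
\beta^\star=\frac{n_\U}{n_\C+n_\U}+\frac{\sigma_\mathrm{cov}}{\mathrm{Var}(\ell(-g(x)))}\frac{n_\C n_\U}{n_\C+n_\U}.
\end{align*}

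Finally, since the objective is a strictly convex parabola and $\beta$ is restricted to $[0,1]$, the constrained minimizer equals $\beta^\star$ when $\beta^\star\in[0,1]$ and the nearer endpoint otherwise, which is precisely $\mathrm{clip}_{[0,1]}(\beta^\star)$, giving the claimed formula. I expect the only genuine obstacle to be the bookkeeping in the variance expansion: correctly isolating the correlated pair $(\bar{A},\bar{B})$ from the independent $\bar{C}$, and noticing that $\mathrm{Var}(B)=\mathrm{Var}(C)$ so that a single variance $\mathrm{Var}(\ell(-g(x)))$ — rather than a mixture of two different variances — governs the denominator. The term $\mathrm{Var}(\bar{A})$ is constant in $\beta$ and can be dropped before differentiating.
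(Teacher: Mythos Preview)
Your proposal is correct and follows essentially the same approach as the paper: express $\mathrm{Var}(\hat R_{\mathrm{SC},\ell}(g))$ as a quadratic in $\beta$, identify the unconstrained minimizer, and clip to $[0,1]$. The paper arrives at the identical quadratic by expanding $\mathbb{E}[(\hat R_{\mathrm{SC},\ell}(g))^2]-\mu^2$ term by term and then completing the square, whereas you reach it directly via the covariance decomposition $\mathrm{Var}(\bar A+(1-\beta)\bar B+\beta\bar C)$ together with the independence of $\bar C$ from $(\bar A,\bar B)$; your bookkeeping is cleaner but the argument is the same.
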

Thus, $\beta$ lies in $(0, 1)$ when the covariance $\sigma_{\mathrm{cov}}$ is not so large.
If $\beta \ne 0$, it means that the unlabeled data $\mathcal{D}_\U$ does help the classifier by reducing empirical variance when Eq.~\eqref{eq:weighted} is adopted.
However, computing the $\beta$ that minimizes empirical variance is computationally inefficient since it involves computing $\sigma_\mathrm{cov}$ and $\mathrm{Var}(l(-g(x)))$.
In practice, we use $\beta = \frac{n_\U}{n_\C + n_\U}$ for all experiments by assuming that the covariance is small enough.

In our preliminary experiments, we sometimes observed that the empirical estimate $\hat{R}_{\mathrm{SC},\ell}$ of Eq.~\eqref{eq:weighted} became negative and led to overfitting.
We can mitigate this phenomenon by employing a simple yet highly effective technique from \citet{kiryo2017positive},
which is proposed to solve a similar overfitting problem (see Appendix for implementation details).
\begin{algorithm}[tb]
  \caption{2IWIL}
  \label{alg:2iwil}
\begin{algorithmic}[1]
  \State {\bfseries Input:} Expert trajectories and confidence $\mathcal{D}_\C = \{(x_{\C,i}, r_i)\}_{i=1}^{n_\C}$, $\mathcal{D}_\U = \{x_{\U,i}\}_{i=1}^{n_\U}$
  \State Estimate the class prior by $\hat{\alpha}=\frac{1}{n_\C}\sum_{i=1}^{n_\C}r_i$
  \State Train a probabilistic classifier by minimizing Eq.~\eqref{eq:weighted} with $\beta=\frac{n_u}{n_u+n_c}$
  \State Predict confidence scores $\{\Hat{r}_{\U,i}\}_{i=1}^{n_\U}$ for $\{x_{\U,i}\}_{i=1}^{n_\U}$
  \For{$i=0,1,2,...$}
  \State Sample trajectories $\{x_i\}_{i=1}^{n_a}\sim \pi_{\theta}$
  \State Update the discriminator parameters by maximizing Eq.~\eqref{eq:gail_reweight}
  \State Update $\pi_{\theta}$ with reward $-\log D_w(x)$ using TRPO
  \EndFor
  \State \textbf{end for}
\end{algorithmic}
\end{algorithm}

\subsubsection{Theoretical Analysis}

Below, we show that the estimation error of Eq.~\eqref{eq:weighted} can be bounded.
This means that its minimizer is asymptotically equivalent to the minimizer of the standard classification risk $R_{\mathrm{PN},\ell}$,
which provides a consistent estimator of $p(y=+1|x)$.
We provide the estimation error bound with Rademacher complexity~\citep{bartlett2002rademacher}.
Denote $\mathfrak{R}_n(\mathcal{G})$ be the Rademacher complexity of the function class $\mathcal{G}$ with the sample size $n$.
\begin{theorem}
    \label{theorem:2iwil-bound}
    Let $\mathcal{G}$ be the hypothesis class we use.
    Assume that the loss function $\ell$ is $\rho_\ell$-Lipschitz continuous,
    and that there exists a constant $C_\ell > 0$ such that $\sup_{x \in \mathcal{X}, y \in \{\pm 1\}}|\ell(yg(x))| \leq C_\ell$ for any $g \in \mathcal{G}$.
    Let $\hat{g} \triangleq \argmin_{g \in \mathcal{G}} \hat{R}_{\mathrm{SC},\ell}(g)$ and $g^* \triangleq \argmin_{g \in \mathcal{G}} R_{\mathrm{SC},\ell}(g)$.
    For $\delta \in (0, 1)$, with probability at least $1 - \delta$ over repeated sampling of data for training $\hat{g}$,
    \begin{align*}
        R_{\mathrm{SC},\ell}&(\hat{g}) - R_{\mathrm{SC},\ell}(g^*)
        \leq 16\rho_\ell((3 - \beta)\mathfrak{R}_{n_\C}(\mathcal{G}) + \beta\mathfrak{R}_{n_\U}(\mathcal{G}))
        + 4C_\ell\sqrt{\frac{\log(8/\delta)}{2}}\left((3 - \beta)n_\C^{-\frac{1}{2}} + \beta n_\U^{-\frac{1}{2}}\right).
    \end{align*}
\end{theorem}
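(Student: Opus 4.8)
The plan is to follow the standard route for estimation-error bounds in empirical risk minimization: reduce the excess risk to a uniform deviation, then control that deviation by a concentration inequality combined with Rademacher complexity. First I would use the defining property of $\hat{g}$, namely $\hat{R}_{\mathrm{SC},\ell}(\hat{g}) \le \hat{R}_{\mathrm{SC},\ell}(g^*)$, to obtain the familiar two-sided bound
\begin{align*}
R_{\mathrm{SC},\ell}(\hat{g}) - R_{\mathrm{SC},\ell}(g^*) \le \sup_{g \in \mathcal{G}}\bigl(R_{\mathrm{SC},\ell}(g) - \hat{R}_{\mathrm{SC},\ell}(g)\bigr) + \sup_{g \in \mathcal{G}}\bigl(\hat{R}_{\mathrm{SC},\ell}(g) - R_{\mathrm{SC},\ell}(g)\bigr),
\end{align*}
so that it suffices to control the two one-sided uniform deviations.

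Second, because $\hat{R}_{\mathrm{SC},\ell}$ is the sum of an empirical average over $\mathcal{D}_\C$ (estimating the $q$-expectation) and an empirical average over $\mathcal{D}_\U$ (estimating the $p$-expectation), built from independent samples of different sizes, I would split each supremum additively into a confidence part and an unlabeled part. Writing $f_\C(x,r;g) = r\ell(g(x)) - r\ell(-g(x)) + (1-\beta)\ell(-g(x))$ and $f_\U(x;g) = \beta\ell(-g(x))$ for the respective integrands, the boundedness assumption $\sup_{x,y}|\ell(yg(x))| \le C_\ell$ together with $r \in [0,1]$ gives the uniform bounds $|f_\C| \le (3-\beta)C_\ell$ and $|f_\U| \le \beta C_\ell$; these are precisely the quantities that generate the coefficients $(3-\beta)$ and $\beta$ everywhere in the statement.

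Third, for each resulting one-sided supremum I would apply McDiarmid's bounded-difference inequality: replacing one example of $\mathcal{D}_\C$ perturbs the confidence supremum by at most $2(3-\beta)C_\ell/n_\C$ and one example of $\mathcal{D}_\U$ perturbs the unlabeled supremum by at most $2\beta C_\ell/n_\U$, so McDiarmid yields deviation terms of order $2(3-\beta)C_\ell\, n_\C^{-1/2}\sqrt{\log(1/\delta')/2}$ and $2\beta C_\ell\, n_\U^{-1/2}\sqrt{\log(1/\delta')/2}$. Adding the two directions doubles these, producing the overall coefficient $4C_\ell$, and a union bound over the uniform-deviation events, after distributing the confidence parameter $\delta$, yields the $\log(8/\delta)$ factor. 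It then remains to bound the expectation of each supremum.

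Finally, the expectations are controlled by symmetrization, which replaces each $\E[\sup_g(\cdots)]$ by (twice) the Rademacher complexity of the induced class $\mathcal{F}_\C$ or $\mathcal{F}_\U$, followed by the Ledoux--Talagrand contraction lemma to peel off the loss. This contraction step is the one I expect to be the main obstacle, because the per-example weight $r$ multiplies the loss inside $f_\C$, so the map $z \mapsto r\ell(z)$ carries a data-dependent Lipschitz constant. The clean way around this is to treat $f_\C$ as a sum of three function classes — $r\ell(g(\cdot))$, $-r\ell(-g(\cdot))$, and $(1-\beta)\ell(-g(\cdot))$ — and apply subadditivity of Rademacher complexity before contracting each piece separately; since $r \in [0,1]$ the first two pieces are each $\rho_\ell$-Lipschitz and the third is $(1-\beta)\rho_\ell$-Lipschitz, so they sum to $(3-\beta)\rho_\ell\,\mathfrak{R}_{n_\C}(\mathcal{G})$, while $f_\U$ contracts to $\beta\rho_\ell\,\mathfrak{R}_{n_\U}(\mathcal{G})$. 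Tracking the factor $2$ from symmetrization, the constant from the contraction lemma, and the two-sided excess-risk decomposition yields the stated $16\rho_\ell$ coefficient, completing the bound.
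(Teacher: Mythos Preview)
Your proposal is correct and follows essentially the same approach as the paper: reduce to a uniform deviation, apply McDiarmid's inequality for the concentration term, and use symmetrization plus Ledoux--Talagrand contraction (with subadditivity over the three pieces of the confidence integrand) for the Rademacher term. The only cosmetic difference is that the paper splits $f_\C$ into its three summands \emph{before} applying McDiarmid (yielding four absolute-value terms and hence eight one-sided events, which is where the $\log(8/\delta)$ comes from), whereas you keep $f_\C$ intact for McDiarmid and split only at the Rademacher step; your organization actually gives slightly smaller constants, so the stated bound follows a fortiori.
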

Thus, we may safely obtain a probabilistic classifier by minimizing $\hat{R}_{\mathrm{SC},\ell}$,
which gives a consistent estimator.


\subsection{IC-GAIL}
Since 2IWIL is a two-step approach by first gathering more confidence data and then conducting importance sampling, the error may accumulate over two steps and degrade the performance.
Therefore, we propose IC-GAIL that can be trained in an end-to-end fashion and perform occupancy measure matching with the optimal normalized occupancy measure $p_\mathrm{opt}$ directly.

Recall that $p = \alpha p_{\mathrm{opt}} + (1 - \alpha)p_{\mathrm{non}}$. Our key idea here is to minimize the divergence between $p$ and $p'$, where $p' = \alpha p_\theta + (1 - \alpha)p_{\mathrm{non}}$.
Intuitively, the divergence between $p_\theta$ and $p_\mathrm{opt}$ is minimized if that between $p$ and $p'$ is minimized.
For Jensen-Shannon divergence, this intuition can be justified in the following theorem.

\begin{theorem}
\label{theorem:gan}
Denote that
\begin{align*}
    V(\pi_\theta,D_w)=\mathbb{E}_{x\sim p}[\log (1-D_w(x))]+\mathbb{E}_{x\sim p'}[\log D_w(x)],
\end{align*}
and that $C(\pi_\theta)=\max_wV(\pi_\theta,D_w)$.
Then, $V(\pi_\theta, D_w)$ is maximized when $D_w=\frac{p'}{p+p'}(\triangleq D_{w^*})$,
and its maximum value is $C(\pi_\theta)=-\log 4+2\mathrm{JSD}(p\|p')$.
Thus, $C(\pi_\theta)$ is minimized if and only if $p_\theta=p_\mathrm{opt}$ almost everywhere.
\end{theorem}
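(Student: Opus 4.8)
The plan is to follow the standard generative-adversarial analysis of \citet{goodfellow2014generative}, adapting it to the two mixture distributions $p$ and $p'$ defined here, and then to exploit the \emph{shared} component $(1-\alpha)p_\mathrm{non}$ to reduce the distribution-matching condition to $p_\theta = p_\mathrm{opt}$. The argument splits naturally into the three claims of the statement: the optimal discriminator, the closed-form maximum value, and the minimizer.

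First I would write $V(\pi_\theta, D_w)$ as a single integral $\int [p(x)\log(1 - D_w(x)) + p'(x)\log D_w(x)]\,dx$ and maximize the integrand pointwise. For fixed $x$, the map $D \mapsto p(x)\log(1-D) + p'(x)\log D$ is strictly concave on $(0,1)$ (its second derivative $-p(x)/(1-D)^2 - p'(x)/D^2$ is negative); setting the first derivative to zero gives $p'(x)/D = p(x)/(1-D)$, whose unique solution is $D = p'(x)/(p(x)+p'(x))$. This identifies the optimal discriminator $D_{w^*} = \frac{p'}{p+p'}$ and establishes the first claim, provided the hypothesis class for $D_w$ is rich enough to realize this ratio pointwise, exactly as in the original GAN argument.

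Next I would substitute $D_{w^*}$ (and $1 - D_{w^*} = \frac{p}{p+p'}$) back into $V$ to obtain $C(\pi_\theta) = \mathbb{E}_{x\sim p}[\log \frac{p}{p+p'}] + \mathbb{E}_{x\sim p'}[\log \frac{p'}{p+p'}]$. The standard trick is to rewrite each log-ratio relative to the average $M = \frac{p+p'}{2}$: since $\log\frac{p}{p+p'} = \log\frac{p}{M} - \log 2$ and likewise for $p'$, the two expectations collapse to $\mathrm{KL}(p\|M) + \mathrm{KL}(p'\|M) - 2\log 2$. Using $\mathrm{JSD}(p\|p') = \frac{1}{2}\mathrm{KL}(p\|M) + \frac{1}{2}\mathrm{KL}(p'\|M)$, this equals $2\,\mathrm{JSD}(p\|p') - \log 4$, which is precisely the stated maximum value $-\log 4 + 2\,\mathrm{JSD}(p\|p')$.

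Finally, I would invoke the nonnegativity of the Jensen--Shannon divergence, which vanishes exactly when $p = p'$ almost everywhere, so $C(\pi_\theta)$ attains its minimum $-\log 4$ if and only if $p = p'$ a.e. The step genuinely specific to this setting, and the one I expect to be the crux, is translating $p = p'$ into $p_\theta = p_\mathrm{opt}$: substituting the mixture definitions gives $\alpha p_\mathrm{opt} + (1-\alpha)p_\mathrm{non} = \alpha p_\theta + (1-\alpha)p_\mathrm{non}$, and cancelling the common term $(1-\alpha)p_\mathrm{non}$ then dividing by $\alpha$ yields $p_\theta = p_\mathrm{opt}$ a.e. The subtlety to flag is the requirement $\alpha > 0$, without which no optimal component is present and the cancellation carries no information about $p_\theta$; under that assumption the reduction is clean and completes the proof.
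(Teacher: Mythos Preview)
Your proposal is correct and follows essentially the same approach as the paper: pointwise maximization of the integrand to identify $D_{w^*}=\frac{p'}{p+p'}$ (the paper simply cites Proposition~1 of \citet{goodfellow2014generative} here), substitution and the $-\log 4$ rewrite to obtain $2\,\mathrm{JSD}(p\|p')$, and then cancellation of the shared $(1-\alpha)p_\mathrm{non}$ term to reduce $p=p'$ to $p_\theta=p_\mathrm{opt}$. Your explicit flag that $\alpha>0$ is needed for the final cancellation is a point the paper leaves implicit.
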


Theorem~\ref{theorem:gan} implies that the optimal policy can be found by solving the following objective,
\begin{align}\label{eq:icgail_origin}
    \min_\theta \max_w \mathbb{E}_{x\sim p}[\log (1-D_w(x))]+\mathbb{E}_{x\sim p'}[\log D_w(x)].
\end{align}
The expectation in the first term can be approximated from $\mathcal{D}_\U$,
while the expectation in the second term is the weighted sum of the expectation over $p_\theta$ and $p_{\mathrm{non}}$.
Data $\mathcal{D}_\mathrm{a}=\{x_{\mathrm{a},i}\}_i^{n_\mathrm{a}}$ sampled from $p_\theta$ can be obtained by executing the current policy $\pi_\theta$.
However, we cannot directly obtain samples from $p_{\mathrm{non}}$ since it is unknown.
To overcome this issue, we establish the following theorem.

\begin{theorem}\label{theorem:transform}
$V(\pi_\theta, D_w)$ can be transformed to $\tilde{V}(\pi_\theta, D_w)$, which is defined as follows:
\begin{align}\label{eq:icgail}
    \tilde{V}&(\pi_\theta, D_w) = \mathbb{E}_{x\sim p}[\log (1-D_w(x))]+\alpha\mathbb{E}_{x\sim p_\theta}[\log D_w(x)]+\mathbb{E}_{x,r\sim q}[(1-r)\log D_w(x)].
\end{align}
\end{theorem}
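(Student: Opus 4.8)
The plan is to leave the first term of $V$ untouched and to rewrite only the second term $\mathbb{E}_{x\sim p'}[\log D_w(x)]$, exploiting the definition $p' = \alpha p_\theta + (1-\alpha)p_{\mathrm{non}}$. First I would apply linearity of expectation to split this term as $\alpha\mathbb{E}_{x\sim p_\theta}[\log D_w(x)] + (1-\alpha)\mathbb{E}_{x\sim p_{\mathrm{non}}}[\log D_w(x)]$. The first summand is already exactly the second term appearing in $\tilde V$, so no work is needed there, and the whole argument reduces to re-expressing the $p_{\mathrm{non}}$ expectation in terms of the confidence distribution $q$.

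The key step is a Bayes'-rule identity for the non-optimal occupancy measure, analogous to Eq.~\eqref{eq:bayes}. Since $r(x) = p(y=+1\mid x)$ we have $p(y=-1\mid x) = 1-r(x)$, and with $\Pr(y=-1) = 1-\alpha$ the same computation that produced Eq.~\eqref{eq:bayes} yields $p_{\mathrm{non}}(x) = p(x\mid y=-1) = \frac{(1-r(x))\,p(x)}{1-\alpha}$, hence $(1-\alpha)p_{\mathrm{non}}(x) = (1-r(x))\,p(x)$. Substituting this into the remaining expectation gives $(1-\alpha)\mathbb{E}_{x\sim p_{\mathrm{non}}}[\log D_w(x)] = \int (1-r(x))\,p(x)\log D_w(x)\,dx = \mathbb{E}_{x\sim p}[(1-r(x))\log D_w(x)]$.

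Finally I would convert this expectation over $p$ into one over $q$. Because $q(x,r) = p(x)\,p_{\mathrm{r}}(r\mid x)$ with $p_{\mathrm{r}}(r\mid x) = \delta(r - r(x))$, drawing $(x,r)\sim q$ is the same as sampling $x\sim p$ and setting $r = r(x)$; integrating out the delta gives $\mathbb{E}_{x\sim p}[(1-r(x))\log D_w(x)] = \mathbb{E}_{x,r\sim q}[(1-r)\log D_w(x)]$, which is precisely the third term of $\tilde V$. Collecting the three pieces recovers $\tilde V(\pi_\theta,D_w)$ and establishes $V = \tilde V$. I expect no substantive obstacle: the only points requiring care are confirming the negative-class Bayes' identity (the $1-\alpha$ in the denominator must cancel the $1-\alpha$ weight so that the unknown class prior drops out) and making the delta-distribution substitution rigorous, both of which are routine under the assumptions already in place.
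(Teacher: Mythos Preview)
Your proposal is correct and follows essentially the same route as the paper's proof: split $\mathbb{E}_{x\sim p'}[\log D_w(x)]$ via $p'=\alpha p_\theta+(1-\alpha)p_{\mathrm{non}}$, apply the negative-class Bayes' identity $p_{\mathrm{non}}(x)=\frac{(1-r(x))p(x)}{1-\alpha}$, and pass from the $p$-expectation to a $q$-expectation using the delta structure of $q$. The only cosmetic difference is that the paper writes the intermediate step as $(1-\alpha)\mathbb{E}_{x,r\sim q}\bigl[\frac{1-r}{1-\alpha}\log D_w(x)\bigr]$ before cancelling the $1-\alpha$ factors, whereas you cancel them one line earlier.
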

We can approximate Eq.~\eqref{eq:icgail} given finite samples $\mathcal{D}_\C$, $\mathcal{D}_\U$, and $\mathcal{D}_\mathrm{a}$.
In practice, we perform alternative gradient descent with respect to $\theta$ and $w$ to solve this optimization problem.
Below, we show that the estimation error of $\tilde{V}$ can be bounded for a fixed agent policy $\pi_\theta$.

\subsubsection{Theoretical analysis}
In this subsection, we show that the estimation error of Eq.~\eqref{eq:icgail} can be bounded, given a fixed agent policy $\pi_\theta$.
Let $\hat{V}(\pi_\theta, D_w)$ be the empirical estimate of Eq.~\eqref{eq:icgail}.

\begin{theorem}
    \label{theorem:icgail-bound}
    Let $\mathcal{W}$ be a parameter space for training the discriminator and $D_\mathcal{W} \triangleq \{D_w \mid w \in \mathcal{W}\}$ be its hypothesis space.
    Assume that there exist a constant $C_L > 0$ such that
    $|\log D_w(x)| \leq C_L$ and $|\log(1 - D_w(x))| \leq C_L$ for any $x \in \mathcal{X}$ and $w \in \mathcal{W}$.
    Assume that both $\log D_w(x)$ and $\log (1 - D_w(x))$ for any $w \in \mathcal{W}$ have Lipschitz norms no more than $\rho_L > 0$.
    For a fixed agent policy $\pi_\theta$, let $\{x_{\A,i}\}_{i=1}^{n_a}$ be a sample generated from $\pi_\theta$,
    $D_{\hat{w}} \triangleq \argmax_{w \in \mathcal{W}}\hat{V}(\pi_\theta, D_w)$,
    and $D_{w^*} \triangleq \argmax_{w \in \mathcal{W}}V(\pi_\theta, D_w)$.
    Then, for $\delta \in (0, 1)$, the following holds with probability at least $1 - \delta$:
    \begin{align*}
        V(\pi_\theta, D_{w^*}) - V(\pi_\theta, D_{\hat{w}})
        \leq 16\rho_L(\mathfrak{R}_{n_\U}(D_\mathcal{W}) + \alpha\mathfrak{R}_{n_\A}(D_\mathcal{W}) + \mathfrak{R}_{n_\C}(D_\mathcal{W}))
         + 4C_L\sqrt{\frac{\log(6/\delta)}{2}}\left(n_\U^{-\frac{1}{2}} + \alpha n_\A^{-\frac{1}{2}} + n_\C^{-\frac{1}{2}}\right).
    \end{align*}
\end{theorem}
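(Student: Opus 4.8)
The plan is to treat this as a one-sided uniform-convergence (estimation-error) bound for the \emph{maximization} problem, exactly as in standard generalization analysis. By Theorem~\ref{theorem:transform} we have $V(\pi_\theta,D_w)=\tilde V(\pi_\theta,D_w)$ for every $w$, so it suffices to control $\tilde V-\hat V$, where $\hat V$ replaces each of the three expectations in \eqref{eq:icgail} by its empirical average. First I would reduce the target quantity to a uniform deviation. Using optimality of $D_{\hat w}$ for $\hat V$ (so that $\hat V(\pi_\theta,D_{w^*})-\hat V(\pi_\theta,D_{\hat w})\le 0$) and inserting $\pm\hat V$ evaluated at $w^*$ and $\hat w$, the standard three-term decomposition gives
\begin{align*}
    V(\pi_\theta,D_{w^*})-V(\pi_\theta,D_{\hat w})\le 2\sup_{w\in\mathcal{W}}\bigl|\tilde V(\pi_\theta,D_w)-\hat V(\pi_\theta,D_w)\bigr|,
\end{align*}
which is the source of the leading factor $2$.

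Next I would split the deviation along the three independent samples. Writing $T_1(w),T_2(w),T_3(w)$ for the (expectation minus empirical-average) discrepancies of the three terms in \eqref{eq:icgail} --- over $\mathcal{D}_\U$ for $\E_{x\sim p}[\log(1-D_w(x))]$, over $\{x_{\A,i}\}_{i=1}^{n_\A}$ for $\alpha\,\E_{x\sim p_\theta}[\log D_w(x)]$ (here I use that for a \emph{fixed} $\pi_\theta$ the rollout is an i.i.d.\ sample from $p_\theta$), and over $\mathcal{D}_\C$ for $\E_{x,r\sim q}[(1-r)\log D_w(x)]$ --- the triangle inequality yields $\sup_w|\tilde V-\hat V|\le\sum_{j=1}^3\sup_w|T_j(w)|$. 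Each $\sup_w|T_j|$ is then handled by the usual McDiarmid-plus-symmetrization recipe. Since $|\log D_w(x)|\le C_L$, $|\log(1-D_w(x))|\le C_L$, and $(1-r)\in[0,1]$, replacing a single point of $\mathcal{D}_\U$, $\mathcal{D}_\A$, or $\mathcal{D}_\C$ perturbs the corresponding empirical average by at most $2C_L/n_\U$, $2\alpha C_L/n_\A$, or $2C_L/n_\C$, so McDiarmid concentrates each $\sup_w|T_j|$ around its mean with a $\sqrt{\log(\cdot)/(2n_j)}$ tail that carries the matching $n_\U^{-1/2}$, $\alpha n_\A^{-1/2}$, $n_\C^{-1/2}$ scaling.

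I would then bound the expected suprema by Rademacher complexities: symmetrization replaces $\E[\sup_w|T_j|]$ by the Rademacher complexity of the relevant composite class, and Talagrand's contraction lemma --- using the hypotheses that $\log D_w$ and $\log(1-D_w)$ have Lipschitz norm at most $\rho_L$ --- reduces each to $\rho_L\mathfrak{R}_{n_j}(D_{\mathcal{W}})$, with the coefficient $\alpha$ surviving in the $\mathcal{D}_\A$ term to produce $\alpha\mathfrak{R}_{n_\A}(D_{\mathcal{W}})$. A union bound over the six one-sided events (two directions for each of the three terms) sets each failure probability to $\delta/6$, which is why $\log(6/\delta)$ appears inside the square root. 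Collecting the standard factors of two from the reduction above, from symmetrization, from the two-sided argument, and from the contraction inequality gives the stated coefficient $16\rho_L$ on the Rademacher terms and $4C_L\sqrt{\log(6/\delta)/2}$ on the parametric terms.

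\textbf{The main obstacle} is the confidence term $T_3$, whose summands $(1-r_i)\log D_w(x_{\C,i})$ carry \emph{sample-dependent} weights $(1-r_i)$, whereas the ordinary contraction lemma assumes a single fixed Lipschitz map applied coordinatewise. I would resolve this with the per-coordinate version of the contraction inequality: each scaling map $v\mapsto(1-r_i)\,v$ is $1$-Lipschitz and vanishes at $0$ (since $0\le 1-r_i\le 1$), so composing with the $\rho_L$-Lipschitz $\log D_w$ does not inflate the Rademacher complexity beyond $\rho_L\mathfrak{R}_{n_\C}(D_{\mathcal{W}})$. Verifying the legitimacy of this per-coordinate contraction, together with the bookkeeping that lands the constants exactly on $16\rho_L$ and $4C_L$, is the only delicate part; the remaining McDiarmid and symmetrization steps are routine.
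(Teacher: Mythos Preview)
Your proposal is correct and follows essentially the same route as the paper's proof: the three-term decomposition $V(D_{w^*})-V(D_{\hat w})\le 2\sup_w|\tilde V-\hat V|$, a split into the $\mathcal{D}_\U$, $\mathcal{D}_\A$, and $\mathcal{D}_\C$ deviations, McDiarmid plus symmetrization plus Ledoux--Talagrand contraction on each, and a union bound over six one-sided events. The paper handles your ``main obstacle'' more tersely---it simply notes $|1-r_i|\le 1$ and applies the same bound, implicitly relying on the per-coordinate form of the contraction inequality that you spell out; your explicit treatment is if anything more careful than the paper's.
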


Theorem~\ref{theorem:icgail-bound} guarantees that the estimation of Eq.~\eqref{eq:icgail} provides a consistent maximizer with respect to the original objective in Eq.~\eqref{eq:icgail_origin} at each step of the discriminator training.

\subsubsection{Practical implementation of IC-GAIL}
Even though Eq.~\eqref{eq:icgail} is theoretically supported, when the class prior $\alpha$ is low, the influence of the agent become marginal in the discriminator training.
This issue can be mitigated by thresholding $\alpha$ in Eq.~\eqref{eq:icgail} as follows:
\begin{align}\label{eq:practical-icgail}
    \min_\theta\max_w\mathbb{E}_{x\sim p}[\log (1-D_w(x))]+\lambda\mathbb{E}_{x\sim p_\theta}[\log D_w(x)]+(1-\lambda)\mathbb{E}_{x,r\sim q}\left[\frac{(1-r)}{(1-\alpha)}\log D_w(x)\right],
\end{align}
where $\lambda=\max\{\tau, \alpha\}$ and $\tau\in(0,1]$.
The training procedure of IC-GAIL is summarized in Algorithm~\ref{alg:icgail}.
Note that Eq.~\eqref{eq:practical-icgail} returns to Eq.~\eqref{eq:gail} and learns an sub-optimal policy when $\tau=1$.

\begin{algorithm}[tb]
  \caption{IC-GAIL}
  \label{alg:icgail}
\begin{algorithmic}[1]
  \State {\bfseries Input:} Expert trajectories, confidence, and weight threshold $\{x_{\U,i}\}_{i=1}^{n_\U}$, $\{(x_{\C,i}, r_i)\}_{i=1}^{n_\C}$, $\tau$
  \State Estimate the class prior by $\hat{\alpha}=\frac{1}{n_\C}\sum_{i=1}^{n_\C}r_i$
  \State $\lambda=\max\{\tau, \hat{\alpha}\}$
  \For{$i=0,1,2,...$}
  \State Sample trajectories $\{x_i\}_{i=1}^{n_a}\sim \pi_{\theta}$
  \State Update the discriminator parameters by maximizing Eq.~\eqref{eq:practical-icgail}
  \State Update $\pi_{\theta}$ with reward $-\log D_w(x)$ using TRPO
  \EndFor
  \State \textbf{end for}
\end{algorithmic}
\end{algorithm}

\subsection{Discussion}
\label{sec:discussion}
To understand the difference between 2IWIL and IC-GAIL, we discuss it from three different perspectives: unlabeled data, confidence data, and the class prior. 

\textbf{Role of unlabeled data:}
It should be noted that unlabeled data plays different roles in the two methods. In 2IWIL, we show that unlabeled data reduces the variance of the empirical risk estimator as shown in Proposition~\ref{proposition:min_var}.

On the other hand, in addition to making more accurate estimation, the usefulness of unlabeled data in IC-GAIL is similar to guided exploration~\citep{kang2018policy}. 
We may analogize confidence information in the imperfect demonstration setting to reward functions since both of them allow agents to learn an optimal policy in IL and RL, respectively.
Likewise, fewer confidence data can be analogous to sparse reward functions.
Even though a small number of confidence data and sparse reward functions do not make objective such as Eqs.~\eqref{eq:reward_sum} and~\eqref{eq:gail_reweight} biased,
they cause practical issues such as a deficiency in information for exploration.
To mitigate the problem, we imitate from sub-optimal demonstrations and use confidence information to refine the learned policy, which is similar to \citet{kang2018policy} in the sense that they imitate a sub-optimal policy to guide RL algorithms in the sparse reward setting.

\textbf{Role of confidence data:}
Confidence data is utilized to train a classifier and to reweight $p_\mathrm{opt}$ in 2IWIL,
which causes the two-step training scheme and therefore the error is accumulated in the prediction phase and the occupancy measure matching phase.
Differently, IC-GAIL instead compensates the $p_\mathrm{non}$ portion in the given imperfect demonstrations by mimicking the composition of $p$.
The advantage of IC-GAIL over 2IWIL is that it avoids the prediction error by employing an end-to-end training scheme.

\textbf{Influence of the class-prior $\alpha$:}
The class prior in 2IWIL as shown in Eq.~\eqref{eq:gail_reweight} serves as a normalizing constant so that the weight $\frac{r(x)}{\alpha}$ for reweighting $p$ to $p_\mathrm{opt}$ has unit mean. Consequently, the class prior $\alpha$ does not affect the convergence of the agent policy. 
On the other hand, the term with respect to the agent $p_\theta$ is directly scaled by $\alpha$ in Eq.~\eqref{eq:icgail} of IC-GAIL. To comprehend the influence, we may expand the reward function from the discriminator $-\log D_w^*(x)=-\log \left(\left(\frac{\alpha}{(1-\alpha)}p_\theta+p_\mathrm{non}\right)/\left(\frac{\alpha}{(1-\alpha)}(p_\mathrm{opt}+p_\theta)+2p_\mathrm{non}\right)\right)$ and it shows that the agent term is scaled by $\frac{\alpha}{(1-\alpha)}$, which makes the reward function prone to be a constant when $\alpha$ is small. Therefore the agent learns slower than in 2IWIL, where the reward function is $-\log\left(p_\theta/(p_\theta+p_\mathrm{opt})\right)$.

\begin{figure*}[t]
    \centering
    \includegraphics[scale=0.52]{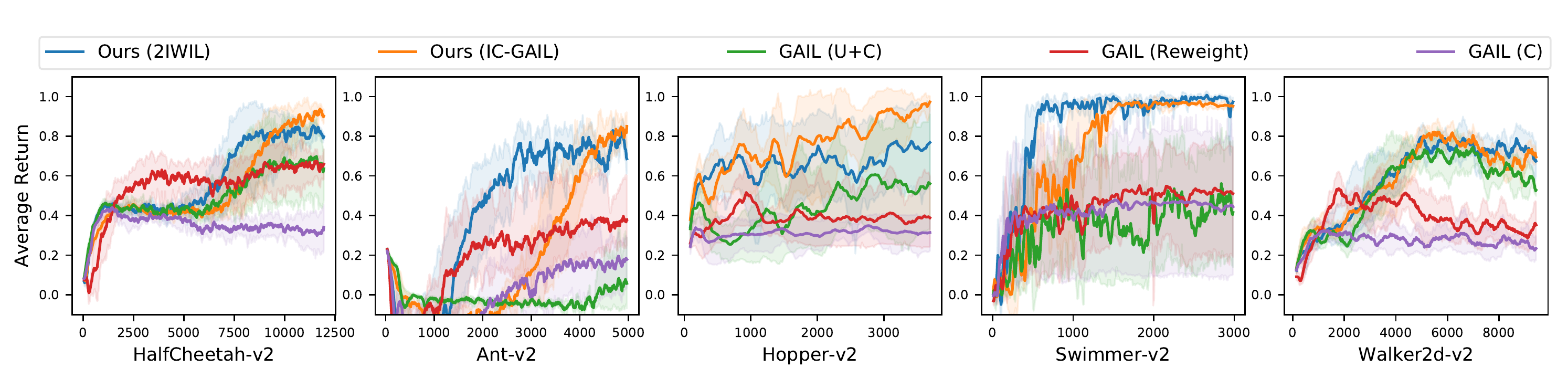}
    \caption{Learning curves of our 2IWIL and IC-GAIL versus baselines given imperfect demonstrations. The x-axis is the number of training iterations and the shaded area indicates standard error.}
    \label{fig:summary}
\end{figure*}

\section{Experiments}\label{experiments}
In this section, we aim to answer the following questions with experiments. (1) \emph{Do 2IWIL and IC-GAIL methods allow agents to learn near-optimal policies when limited confidence information is given?} (2) \emph{Are the methods robust enough when the given confidence is less accurate?} and (3) \emph{Do more unlabeled data results in better performance in terms of average return?} The discussions are given in Sec.~\ref{exp:performance}, \ref{exp:noise}, and \ref{exp:unlabel} respectively.

\textbf{Setup  } To collect demonstration data, we train an optimal policy ($\pi_\mathrm{opt}$) using TRPO~\citep{schulman2015trust} and select two intermediate policies ($\pi_{1}$ and $\pi_{2}$). 
The three policies are used to generate the same number of state-action pairs. 
In real-world tasks, the confidence should be given by human labelers. We simulate such labelers by using a probabilistic classifier $p^\star(y=+1|x)$ pre-trained with demonstration data and randomly choose $20\%$ of demonstration data to label confidence scores $r(x)=p^\star(y=+1|x)$.

We compare the proposed methods against three baselines. Denote that $\mathcal{D}_\C^x\triangleq\{x_{\C,i}\}_{i=1}^{n_\C}$, $\mathcal{D}_\C^r\triangleq\{r_i\}_{i=1}^{n_\C}$, and $\mathcal{D}_\U^x\triangleq \mathcal{D}_\U$. GAIL (U+C) takes all the pairs as input without considering confidence. 
To show if reweighting using Eq.~\eqref{eq:gail_reweight} makes difference, GAIL (C) and GAIL (Reweight) use the same state-action pairs $\D_\C^x$ but GAIL (Reweight) additionally utilizes reweighting with confidence information $\D_\C^r$. The baselines and the proposed methods are summarized in Table~\ref{table:methods}.

To assess our methods, we conduct experiments on Mujoco~\citep{todorov2012mujoco}. Each experiment is performed with five random seeds. The hyper-parameter $\tau$ of IC-GAIL is set to $0.7$ for all tasks. To show the performance with respect to the optimal policy that we try to imitate, the accumulative reward is normalized with that of the optimal policy and a uniform random policy so that $1.0$ indicates the optimal policy and $0.0$ the random one. Due to space limit, we defer implementation details, the performance of the optimal and the random policies, the specification of each task, and the uncropped figures of Ant-v2 to Appendix.
\begin{table}[t]
\caption{Comparison between proposed methods (IC-GAIL and 2IWIL) and baselines.}
\label{table:methods}
\vskip 0.15in
\begin{center}
\begin{small}
\begin{sc}
\begin{tabular}{lccr}
\toprule
Method & Input & objective \\
\midrule
IC-GAIL  & $\mathcal{D}_\U\cup \mathcal{D}_\C$ & Eq.~\eqref{eq:icgail}\\
2IWIL   & $\mathcal{D}_\U\cup \mathcal{D}_\C$ & Eq.~\eqref{eq:weighted}\\
\midrule
GAIL (U+C)  & $\mathcal{D}_\U\cup \mathcal{D}_\C^x$&Eq.~\eqref{eq:gail}\\
GAIL (C)  & $\mathcal{D}_\C^x$&Eq.~\eqref{eq:gail}\\
GAIL (reweight)  & $\mathcal{D}_\C$&Eq.~\eqref{eq:gail_reweight}\\
\bottomrule
\end{tabular}
\end{sc}
\end{small}
\end{center}
\end{table}
\subsection{Performance comparison}\label{exp:performance}
The average return against training iterations in Fig.~\ref{fig:summary} shows that the proposed IC-GAIL and 2IWIL outperform other baselines by a large margin. Due to the mentioned experiment setup, the class prior of the optimal demonstration distribution is around $33\%$. To interpret the experiment results, we would like to emphasize that our experiments are under incomplete optimality setting such that confidence itself is not enough to learn the optimal policy as indicated by the GAIL (Reweight) baseline. Since the difficulty of each task varies, we use different number of $n_c+n_u$ for different tasks. Our contribution is that in addition to the confidence, our methods are able to utilize the demonstration mixture (sub-optimal demonstration) and learn near-optimal policies. 

We can observe that IC-GAIL converges slower than 2IWIL. As discussed in Section~\ref{sec:discussion}, it can be attributed to that the term with respect to the agent in Eq.~\eqref{eq:practical-icgail} is scaled by $0.7$ as specified by $\tau$, which decreases the influence of the agent policy in updating discriminator. The faster convergence of 2IWIL can be an advantage over IC-GAIL when interactions with environments are expensive.
Even though the objective of IC-GAIL becomes biased by not using the class prior $\alpha$, it still converges to near-optimal policies in four tasks.

In Walker2d-v2, the improvement in performance of our methods is not as significant as in other tasks. We conjecture that it is caused by the insufficiency of confidence information. This can be verified by observing that the GAIL (Reweight) baseline in Walker2d-v2 gradually converges to $0.2$ whereas in other tasks it achieves the performance of at least $0.4$. In HalfCheetah-v2, we observe that the discriminator is stuck in a local maximum in the middle of learning, which influences all methods significantly.

The baseline GAIL (Reweight) surpasses GAIL (C) in all tasks, which shows that reweighting enables the agent to learn policies that obtain higher average return. However, since the number of confidence instances is small, the information is not enough to derive the optimal policies. GAIL (U+C) is the standard GAIL without considering confidence information. Although the baseline uses the same number of demonstrations $n_c+n_u$ as our proposed methods, the performance difference is significant due to the use of confidence.

\subsection{Robustness to Gaussian noise in confidence}\label{exp:noise}
\begin{figure}
    \centering
    \includegraphics[scale=0.5]{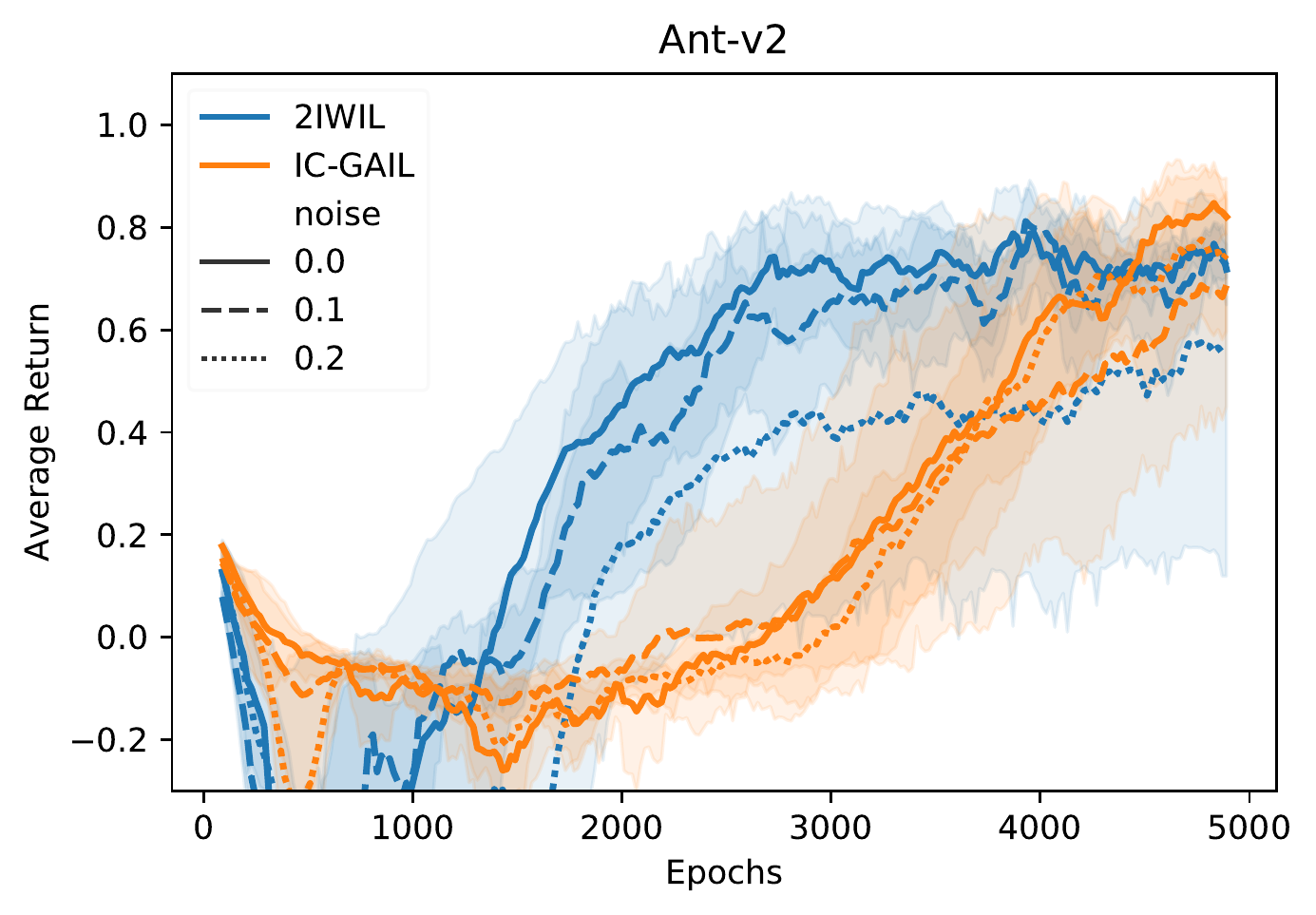}
    \vspace{-0.2in}
    \caption{Learning curves of proposed methods with different standard deviations of Gaussian noise added to confidence. The numbers in the legend indicate the standard deviation of the Gaussian noise.}
    \vspace{-0.1in}
    \label{fig:ant_noise}
\end{figure}
In practice, the oracle that gives confidence scores is basically human labelers and they may not be able to accurately label confidence all the time.
To investigate robustness of our approaches against noise in the confidence scores, we further conduct an experiment on Ant-v2 where the Gaussian noise is added to confidence scores as follows: 
$r(x)=p^\star(y=1\vert x)+\epsilon$, where $\epsilon \sim \mathcal{N}(0, \sigma^2)$. Fig.~\ref{fig:ant_noise} shows the performance of our methods in this noisy confidence scenario. It reveals that both methods are quite robust to noisy confidence, which suggests that the proposed methods are robust enough to human labelers, who may not always correctly assign confidence scores.

\subsection{Influence of unlabeled data}\label{exp:unlabel}
In this experiment, we would like to evaluate the performance of both 2IWIL and IC-GAIL with different numbers of unlabeled data to verify whether unlabeled data is useful. 
As we can see in Fig.~\ref{fig:ant_unlabel}, the performance of both methods grows as the number of unlabeled data increases, which confirms our motivation that using unlabeled data can improve the performance of imitation learning when confidence data is scarce. As discussed in Sec.~\ref{sec:discussion}, the different roles of unlabeled data in the two proposed methods result in dissimilar learning curves with respect to unlabeled data.

\begin{figure}
    \centering
    \includegraphics[scale=0.5]{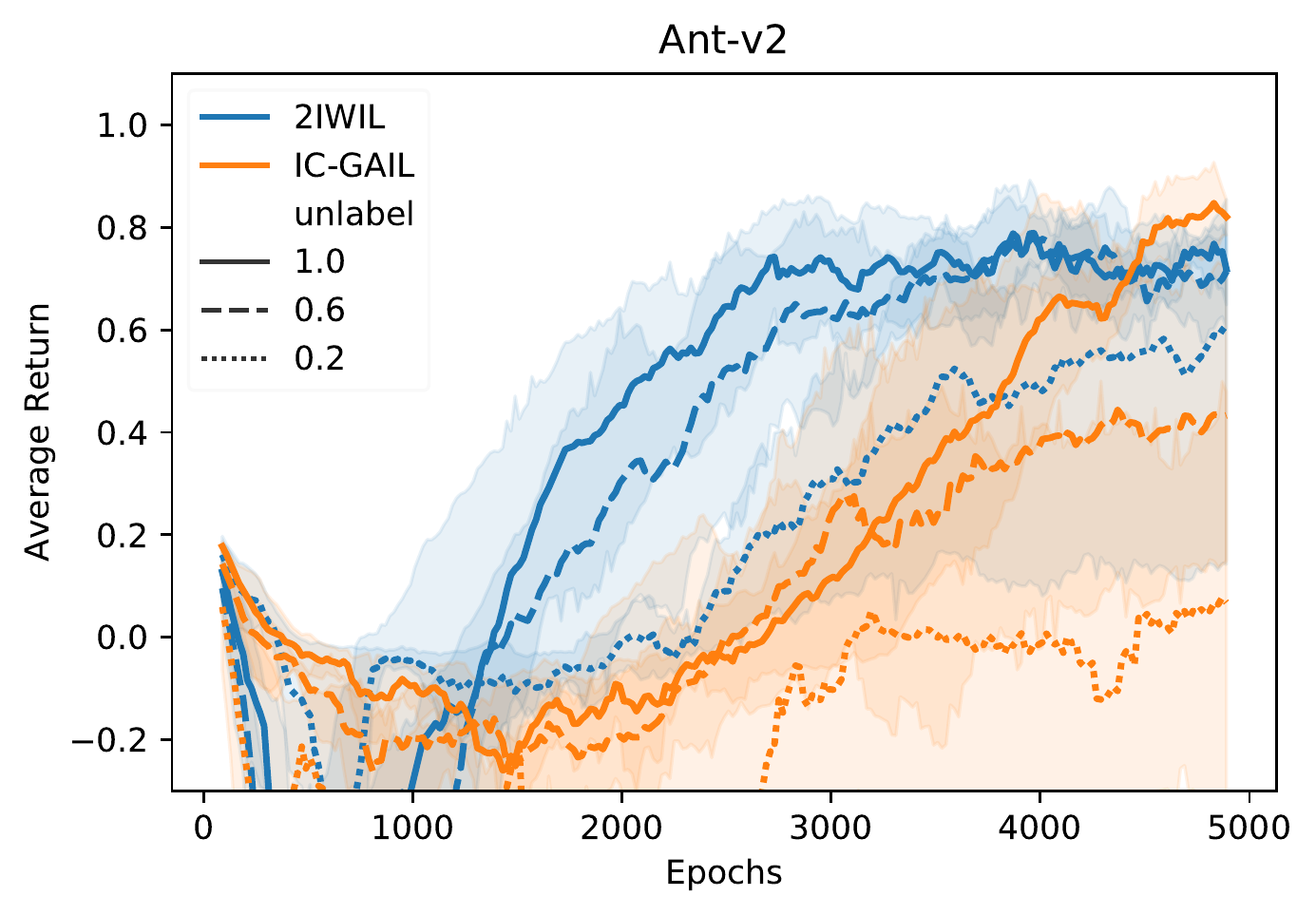}
    \caption{Learning curves of the proposed methods with different number of unlabeled data. The numbers in the legend suggest the proportion of unlabeled data used as demonstrations. $1.0$ is the same as the data used in Fig.~\ref{fig:summary}.}
    \label{fig:ant_unlabel}
\end{figure}
\section{Conclusion}\label{conclusions}
In this work, we proposed two general approaches IC-GAIL and 2IWIL, which allow the agent to utilize both confidence and unlabeled data in imitation learning. 
The setting considered in this paper is usually the case in real-world scenarios because collecting optimal demonstrations is normally costly. In 2IWIL, we utilized unlabeled data to derive a risk estimator and obtained the minimum variance with respect to the combination coefficient $\beta$. 
2IWIL predicts confidence scores for unlabeled data and matches the optimal occupancy measure based on the GAIL objective with importance sampling. For IC-GAIL, we showed that the agent learns an optimal policy by matching a mixture of normalized occupancy measures $p'$ with the normalized occupancy measure of the given demonstrations $p$.

Practically, we conducted extensive experiments to show that our methods outperform baselines by a large margin, to confirm that our methods are robust to noise, and to verify that unlabeled data has a positive correlation with the performance. The proposed approaches are general and can be easily extended to other IL and IRL methods ~\citep{li2017infogail,fu2017learning,kostrikov2018addressing}. 

For future work, we may extend it to a variety of applications such as discrete sequence generation because the confidence in our work can be treated as a property indicator. For instance, to generate soluble chemicals, we may not have enough soluble chemicals, whereas the Crippen function~\citep{crippen19901} can be used to evaluate the solubility as the confidence in this work easily. 

\section*{Acknowledgement}
We thank Zhenghang Cui for helpful discussion. MS was supported by KAKENHI 17H00757, NC was supported by MEXT scholarship, and HB was supported by JST, ACT-I, Grant Number JPMJPR18UI, Japan.

\bibliography{reference}
\bibliographystyle{plainnat}

\newpage
\appendix
\allowdisplaybreaks

\section{Proof for 2IWIL}

\subsection{Proof of Theorem~\ref{lemma:unlabel}}
\begin{theorem*}
The classification risk~\eqref{eq:pnrisk} can be equivalently expressed as 
\begin{align*}
    R_{\mathrm{\mathrm{SC}},\ell}(g)=&\mathbb{E}_{x,r\sim q}[r(\ell(g(x))-\ell(-g(x)))+(1-\beta)\ell(-g(x))]+\mathbb{E}_{x\sim p}[\beta \ell(-g(x))],
\end{align*}
where $\beta\in[0,1]$ is an arbitrary weight.
\end{theorem*}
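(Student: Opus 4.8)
The plan is to start from the positive–negative risk~\eqref{eq:pnrisk} and convert every expectation so that it is taken against the observable mixture $p$ (equivalently against $q$, whose $x$-marginal is $p$). The only analytic input I need is Bayes' rule~\eqref{eq:bayes}, which gives $\alpha p_\mathrm{opt}(x) = r(x)p(x)$, together with its complement: subtracting this identity from the decomposition $p = \alpha p_\mathrm{opt} + (1-\alpha)p_\mathrm{non}$ yields $(1-\alpha)p_\mathrm{non}(x) = (1 - r(x))p(x)$. These two identities let me rewrite the two terms of $R_{\mathrm{PN},\ell}$ as a single expectation against $p$: the optimal term becomes $\mathbb{E}_{x\sim p}[r(x)\ell(g(x))]$ and the non-optimal term becomes $\mathbb{E}_{x\sim p}[(1-r(x))\ell(-g(x))]$. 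Because $q(x,r) = p(x)\delta(r - r(x))$, each of these is exactly the corresponding expectation under $q$, so at this stage $R_{\mathrm{PN},\ell}(g) = \mathbb{E}_{x,r\sim q}[r\ell(g(x)) + (1-r)\ell(-g(x))]$.

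Next I would perform the elementary rearrangement $r\ell(g(x)) + (1-r)\ell(-g(x)) = r(\ell(g(x)) - \ell(-g(x))) + \ell(-g(x))$, which already matches the target up to how the bare $\ell(-g(x))$ term is distributed. The final step is to introduce the free weight $\beta$. Here the key observation is that $\ell(-g(x))$ depends on $x$ alone, so its expectation under $q$ and under $p$ coincide, i.e.\ $\mathbb{E}_{x,r\sim q}[\ell(-g(x))] = \mathbb{E}_{x\sim p}[\ell(-g(x))]$. Splitting $\ell(-g(x)) = (1-\beta)\ell(-g(x)) + \beta\ell(-g(x))$ and routing the first piece through $q$ and the second through $p$, then recombining the two $q$-terms into $\mathbb{E}_{x,r\sim q}[r(\ell(g(x)) - \ell(-g(x))) + (1-\beta)\ell(-g(x))]$, gives precisely $R_{\mathrm{SC},\ell}(g)$.

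There is essentially no hard step: the content is the pair of Bayes identities plus the observation that $p$ is the $x$-marginal of $q$. The only point worth stating carefully is why $\beta$ may be arbitrary — it is arbitrary precisely because the two representations of the $\ell(-g(x))$ mass are equal, so shifting any fraction $\beta$ of it from $q$ to $p$ leaves the risk unchanged. I would also make the delta-distribution manipulation $\mathbb{E}_{x,r\sim q}[r\,h(x)] = \mathbb{E}_{x\sim p}[r(x)h(x)]$ explicit, since it is what justifies replacing the oracle confidence $r$ by the function $r(x)$ and back, and is the one place where the precise definition of $q$ is actually used.
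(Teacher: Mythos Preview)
Your proposal is correct and follows essentially the same route as the paper's proof: both apply the Bayes identities $\alpha p_\mathrm{opt}(x)=r(x)p(x)$ and $(1-\alpha)p_\mathrm{non}(x)=(1-r(x))p(x)$ to collapse $R_{\mathrm{PN},\ell}$ into $\mathbb{E}_{x,r\sim q}[r\ell(g(x))+(1-r)\ell(-g(x))]$, and then introduce $\beta$ by adding and subtracting $\beta\ell(-g(x))$ and using that the $x$-marginal of $q$ is $p$. Your write-up is in fact slightly more explicit than the paper's about the delta-distribution step $\mathbb{E}_{x,r\sim q}[r\,h(x)]=\mathbb{E}_{x\sim p}[r(x)h(x)]$, which the paper passes over in one line.
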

\begin{proof}
Similar to Eq.~\eqref{eq:bayes}, we may express $p(x\vert y=-1)$ by the Bayes' rule as
\begin{align}\label{eq:bayes_neg}
    p(x\vert y=-1)=\frac{(1-r(x))p(x)}{1-\alpha}.
\end{align}
Consequently, the statement can be confirmed as follows:
\begin{align*}
    R_{\mathrm{\mathrm{SC}},\ell}(g)
    =&\int \alpha p(x\vert y=+1)\ell(g(x))+(1-\alpha)p(x\vert y=-1)\ell(-g(x))dx \nonumber
    \\
    =&\int\alpha\frac{r(x)p(x)}{\alpha}\ell(g(x))+(1-\alpha)\frac{(1 - r(x))p(x)}{1-\alpha}\ell(-g(x))dx
    && \text{($\because$ Eqs.~\eqref{eq:bayes} and~\eqref{eq:bayes_neg})} \nonumber
    \\
    =&\int p(x)r(x)\ell(g(x))+p(x)(1-r(x))\ell(-g(x))dx
    \\
    =& \int\left\{r\ell(g(x)) + (1-r)\ell(-g(x))\right\}q(x, r)dxdr
    \\
    =&\mathbb{E}_{x,r\sim q}[r\ell(g(x))+(1-r)\ell(-g(x))]\\
    =&\mathbb{E}_{x,r\sim q}\left[r\ell(g(x))+(1-r)\ell(-g(x))+ \underbrace{\beta \ell(-g(x))-\beta \ell(-g(x))}_{=0}\right]\\
    =&\mathbb{E}_{x,r\sim q}[r(\ell(g(x))-\ell(-g(x)))+(1-\beta)\ell(-g(x))]+\mathbb{E}_{x\sim p}[\beta \ell(-g(x))]
    .
\end{align*}
\end{proof}

\subsection{Proof of Proposition~\ref{proposition:min_var}}
\begin{proposition*}
Let $\sigma_\mathrm{cov}$ denote the covariance between $n_\C^{-1}\sum_{i=1}^{n_\C}r_i\{\ell(g(x_{\C,i}))-\ell(-g(x_{\C,i}))\}$ and $n_\C^{-1}\sum_{i=1}^{n_\C}\ell(-g(x_{\C,i}))$. For a fixed $g$, the estimator $\hat{R}_{\mathrm{SC},\ell}(g)$ of Eq.~\eqref{eq:weighted}
has the minimum variance when $\beta=\frac{n_\U}{n_\C+n_\U}+\frac{\sigma_\mathrm{cov}}{\mathrm{Var}(\ell(-g(x)))}\frac{n_\C n_\U}{n_\C+n_\U}$ among estimators in the form of Eq.~\eqref{eq:weighted} for $\beta\in[0,1]$.
\end{proposition*}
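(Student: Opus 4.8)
The plan is to treat $\hat{R}_{\mathrm{SC},\ell}(g)$, for fixed $g$, as an explicit random variable in the samples and minimize its variance as a scalar quadratic in $\beta$. First I would write out the empirical version of Eq.~\eqref{eq:weighted} as a sum of three sample averages,
\begin{align*}
    \hat{R}_{\mathrm{SC},\ell}(g) = T_1 + (1-\beta)T_2 + \beta T_3,
\end{align*}
where $T_1 = n_\C^{-1}\sum_{i=1}^{n_\C} r_i\{\ell(g(x_{\C,i})) - \ell(-g(x_{\C,i}))\}$ and $T_2 = n_\C^{-1}\sum_{i=1}^{n_\C}\ell(-g(x_{\C,i}))$ are computed from $\mathcal{D}_\C$, while $T_3 = n_\U^{-1}\sum_{j=1}^{n_\U}\ell(-g(x_{\U,j}))$ is computed from $\mathcal{D}_\U$. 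The point of this grouping is that $\beta$ enters only through the convex combination $(1-\beta)T_2 + \beta T_3$, and that $\sigma_\mathrm{cov}$ in the statement is exactly $\mathrm{Cov}(T_1, T_2)$.

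Next I would exploit two structural facts. Since $\mathcal{D}_\C$ and $\mathcal{D}_\U$ are drawn as independent i.i.d.\ samples, $T_3$ is independent of the pair $(T_1, T_2)$, so all cross-covariances between $T_3$ and $T_1, T_2$ vanish. Moreover, the covariate marginal of $q(x,r)$ is $p(x)$, so both $\ell(-g(x_{\C,i}))$ and $\ell(-g(x_{\U,j}))$ have per-sample variance equal to $\mathrm{Var}_{x\sim p}(\ell(-g(x)))$; writing $V \triangleq \mathrm{Var}(\ell(-g(x)))$ this gives $\mathrm{Var}(T_2) = V/n_\C$ and $\mathrm{Var}(T_3) = V/n_\U$. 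Substituting, the variance becomes
\begin{align*}
    \mathrm{Var}(\hat{R}_{\mathrm{SC},\ell}(g)) = \mathrm{Var}(T_1) + (1-\beta)^2\frac{V}{n_\C} + 2(1-\beta)\sigma_\mathrm{cov} + \beta^2\frac{V}{n_\U},
\end{align*}
a strictly convex quadratic in $\beta$ (its leading coefficient $V/n_\C + V/n_\U$ is positive).

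Finally I would minimize by setting the derivative in $\beta$ to zero, which yields $\beta = (V/n_\C + \sigma_\mathrm{cov})/(V/n_\C + V/n_\U)$, and then simplify. Multiplying numerator and denominator by $n_\C n_\U/V$ and splitting the fraction reproduces the two stated terms $n_\U/(n_\C+n_\U)$ and $(\sigma_\mathrm{cov}/V)\,n_\C n_\U/(n_\C+n_\U)$; the $\mathrm{clip}_{[0,1]}$ in the main-text version simply projects this unconstrained minimizer back into the admissible range $\beta\in[0,1]$.

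I do not expect any single step to be a serious obstacle, since the argument is a standard minimum-variance-combination computation. The only point requiring care is the bookkeeping of dependence: one must correctly use that $T_3$ is independent of $(T_1,T_2)$ while $T_1$ and $T_2$ are correlated (both built from $\mathcal{D}_\C$), and must justify that the confidence covariates $x_{\C,i}$ share the marginal $p$ so that $T_2$ and $T_3$ have the same per-sample variance. Getting these two facts right is what collapses the variance into the clean quadratic whose minimizer matches the claimed expression.
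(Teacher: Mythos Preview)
Your proposal is correct and follows essentially the same approach as the paper: both compute $\mathrm{Var}(\hat{R}_{\mathrm{SC},\ell}(g))$ as a quadratic in $\beta$ and minimize it, arriving at the identical optimizer. Your version is a bit cleaner because you invoke bilinearity of covariance and the independence of $T_3$ from $(T_1,T_2)$ directly, whereas the paper expands $\mathbb{E}[\hat{R}^2]-\mu^2$ via explicit second-moment computations before collecting the same $\beta$-terms; either way the quadratic and its minimizer coincide.
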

\begin{proof}
Let 
\begin{align*}
    \mu \triangleq& \mathbb{E}_{\mathcal{D}_\C, \mathcal{D}_\U}[\Hat{R}_{\mathrm{SC},\ell}(g)],\\
    \mu_1\triangleq&\mathbb{E}_{\mathcal{D}_\C}\left[\frac{1}{n_\C}\sum_{i=1}^{n_\C}\ell(-g(x_{\C,i}))\right]=\mathbb{E}_{\mathcal{D}_\U}\left[\frac{1}{n_\U}\sum_{i=1}^{n_\U}\ell(-g(x_{\U,i}))\right] = \E_{x \sim p}[\ell(-g(x))],\\
    w_1\triangleq&\mathbb{E}_{\mathcal{D}_\C}\left[\frac{1}{n_\C}\sum_{i=1}^{n_\C}r(x_i)(\ell(g(x_{\C,i}))-\ell(-g(x_{\C,i})))\right],\\
    w_2\triangleq&\mathbb{E}_{\mathcal{D}_\C}\left[\left(\frac{1}{n_\C}\sum_{i=1}^{n_\C}r(x_i)(\ell(g(x_{\C,i}))-\ell(-g(x_{\C,i})))\right)^2\right],\\
    \lambda \triangleq&\mathbb{E}_{\mathcal{D}_\C}\left[\left(\frac{1}{n_\C}\sum_{i=1}^{n_\C}r(x_i)(\ell(g(x_{\C,i})-\ell(-g(x_{\C,i}))))\right)\left(\frac{1}{n_\C}\sum_{i=1}^{n_\C}\ell(-g(x_{\C,i}))\right)\right],\\
    \sigma_\mathrm{cov}\triangleq&\mathrm{Cov}\left(\frac{1}{n_\C}\sum_{i=1}^{n_\C}r_i(\ell(g(x_{\C,i})-\ell(-g(x_{\C,i})))),\frac{1}{n_\C}\sum_{i=1}^{n_\C}\ell(-g(x_{\C,i}))\right)=\lambda-w_1\mu_1
\end{align*}
We may represent $\mathbb{E}_{\mathcal{D}_\C}\left[\left(\frac{1}{n_\C}\sum_{i=1}^{n_\C}\ell(-g(x_{\C,i}))\right)^2\right]$ in terms of $\mathrm{Var}(\ell(-g(x)))$ and $\mu_1$:
\begin{align*}
    \mathbb{E}_{\mathcal{D}_\C}\left[\left(\frac{1}{n_\C}\sum_{i=1}^{n_\C}\ell(-g(x_{\C,i}))\right)^2\right]
    =&\frac{1}{n_\C^2}\mathbb{E}_{\mathcal{D}_\C}\left[\sum_{i=1}^{n_\C}\ell(-g(x_{\C,i}))^2+2\sum_{i=1}^{n_\C}a\sum_{j=1}^{i-1}\ell(-g(x_{\C,i}))\ell(-g(x_{\C,j}))\right]\\
    =&\frac{1}{n_\C^2}\left(n_\C\mathbb{E}_{x\sim p}\left[\ell(-g(x))^2\right]+n_\C(n_\C-1)\mathbb{E}_{x\sim p}\left[\ell(-g(x))\right]^2\right)\\
    =&\frac{1}{n_\C}\mathrm{Var}(\ell(-g(x)))+\mu_1^2.
\end{align*}
Similarly, we obtain $\E_{\mathcal{D}_\U}[(\frac{1}{n_\U}\sum_{i=1}^{n_\U}\ell(-g(x_{\U,i})))^2] = n_\U^{-1}\mathrm{Var}(\ell(-g(x))) + \mu_1^2$.
As a result,
\begin{align*}
    \mathrm{Var}&(\Hat{R}_{\mathrm{SC},\ell}(g))
    \\
    =&\mathbb{E}_{\mathcal{D}_\C, \mathcal{D}_\U}\left[\left(\Hat{R}_{\mathrm{SC},\ell}(g)\right)^2\right]-\mu^2
    \\
    =& \E_{\mathcal{D}_\C,\mathcal{D}_\U}\left[\underbrace{\frac{1}{n_\C}\sum_{i=1}^{n_\C}r_i(\ell(g(x_{\C,i})) - \ell(-g(x_{\C,i})))}_{\text{(A)}} + (1-\beta)\underbrace{\frac{1}{n_\C}\sum_{i=1}^{n_\C}\ell(-g(x_{\C,i}))}_{\text{(B)}} + \beta\underbrace{\frac{1}{n_\U}\sum_{i=1}^{n_\U}\ell(-g(x_{\U,i}))}_{\text{(C)}}\right] - \mu^2
    \\
    =& \underbrace{w_2}_{\text{(A)}^2} + 2(1-\beta)\underbrace{\lambda}_{\text{(A)(B)}} + 2\beta\underbrace{w_1\mu_1}_{\text{(A)(C)}} + (1-\beta)^2\underbrace{\left(\frac{1}{n_\C}\mathrm{Var}(\ell(-g(x))) + \mu_1^2\right)}_{\text{(B)}^2} \\ &\hspace{140pt} + 2(1-\beta)\beta\underbrace{\mu_1^2}_{\text{(B)(C)}} + \beta^2\underbrace{\left(\frac{1}{n_\U}\mathrm{Var}(\ell(-g(x))) + \mu_1^2\right)}_{\text{(C)}^2} - \mu^2
    \\
    =&\underbrace{\left(w_2+2\lambda-\mu^2+\frac{1}{n_\C}\mathrm{Var}(\ell(-g(x)))+\mu_1^2\right)}_{\mathrm{const. w.r.t. }\beta}
    -2\left(\frac{\mathrm{Var}(\ell(-g(x)))}{n_\C}+\sigma_\mathrm{cov}\right)\beta
    +\mathrm{Var}(\ell(-g(x)))\left(\frac{n_\C+n_\U}{n_\C n_\U}\right)\beta^2
    \\
    =&\mathrm{Var}(\ell(-g(x)))\left(\frac{n_\C+n_\U}{n_\C n_\U}\right)\left(\beta-\left(\frac{n_\U}{n_\C+n_\U}+\frac{\sigma_\mathrm{cov}}{\mathrm{Var}(\ell(-g(x)))}\frac{n_\C n_\U}{n_\C+n_\U}\right)\right)^2
    + \mathrm{const}
    .
\end{align*}
Since $\mathrm{Var}(\ell(-g(x)))\left(\frac{n_\C+n_\U}{n_\C n_\U}\right)\geq 0$, and $\beta\in[0,1]$, $\mathrm{Var}(\Hat{R}_{\mathrm{SC},\ell}(g))$ is minimized when\\ $\beta=\mathrm{clip}_{[0, 1]}\left(\frac{n_\U}{n_\C+n_\U}+\frac{\sigma_\mathrm{cov}}{\mathrm{Var}(\ell(-g(x)))}\frac{n_\C n_\U}{n_\C+n_\U}\right)$. Note that $\mathrm{clip}_{[l, u]}(v)=\min\{\max\{v,l\},u\}$.
\end{proof}

\subsection{Proof of Theorem~\ref{theorem:2iwil-bound}}
\begin{theorem*}
    Let $\mathcal{G}$ be the hypothesis class we use.
    Assume that the loss function $\ell$ is $\rho_\ell$-Lipschitz continuous,
    and that there exists a constant $C_\ell > 0$ such that $\sup_{x \in \mathcal{X}, y \in \{\pm 1\}}|\ell(yg(x))| \leq C_\ell$ for any $g \in \mathcal{G}$.
    Let $\hat{g} \triangleq \argmin_{g \in \mathcal{G}} \hat{R}_{\mathrm{SC},\ell}(g)$ and $g^* \triangleq \argmin_{g \in \mathcal{G}} R_{\mathrm{SC},\ell}(g)$.
    For $\delta \in (0, 1)$, with probability at least $1 - \delta$ over repeated sampling of data for training $\hat{g}$,
    \begin{align*}
        R_{\mathrm{SC},\ell}(\hat{g}) - R_{\mathrm{SC},\ell}(g^*)
        \leq& 16\rho_\ell((3 - \beta)\mathfrak{R}_{n_\C}(\mathcal{G}) + \beta\mathfrak{R}_{n_\U}(\mathcal{G}))
        + 4C_\ell\sqrt{\frac{\log(8/\delta)}{2}}\left((3 - \beta)n_\C^{-\frac{1}{2}} + \beta n_\U^{-\frac{1}{2}}\right).
    \end{align*}
\end{theorem*}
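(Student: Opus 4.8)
The plan is to establish this as a standard uniform-convergence (estimation-error) bound, following the route excess risk $\le$ uniform deviation $\to$ McDiarmid $\to$ symmetrization $\to$ contraction. First I would use the optimality of $\hat{g}$ for $\hat{R}_{\mathrm{SC},\ell}$ to obtain the usual decomposition $R_{\mathrm{SC},\ell}(\hat{g}) - R_{\mathrm{SC},\ell}(g^*) \le \sup_{g\in\mathcal{G}}(R_{\mathrm{SC},\ell}(g) - \hat{R}_{\mathrm{SC},\ell}(g)) + \sup_{g\in\mathcal{G}}(\hat{R}_{\mathrm{SC},\ell}(g) - R_{\mathrm{SC},\ell}(g))$, since the middle term $\hat{R}_{\mathrm{SC},\ell}(\hat{g}) - \hat{R}_{\mathrm{SC},\ell}(g^*)$ is nonpositive. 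Because the expression in Eq.~\eqref{eq:weighted} is a sum of an expectation under $q$ (estimated from $\mathcal{D}_\C$) and an expectation under $p$ (estimated from $\mathcal{D}_\U$), I would immediately split each supremum into a confidence-data part and an unlabeled-data part, so that the $n_\C$ and $n_\U$ contributions appear additively rather than combined in an $\ell_2$ fashion; this is what produces the $(3-\beta)n_\C^{-1/2}+\beta n_\U^{-1/2}$ shape of the final bound.

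Next, for each of these one-sided, single-dataset deviations I would apply McDiarmid's bounded-difference inequality. Using the assumption $|\ell(yg(x))|\le C_\ell$, a single confidence sample enters $\hat{R}_{\mathrm{SC},\ell}$ through the summand $r_i(\ell(g(x_{\C,i}))-\ell(-g(x_{\C,i})))+(1-\beta)\ell(-g(x_{\C,i}))$, whose magnitude is at most $(3-\beta)C_\ell$ (since $r_i\in[0,1]$), so changing one point perturbs the empirical risk by at most $2(3-\beta)C_\ell/n_\C$; similarly a single unlabeled point perturbs it by at most $2\beta C_\ell/n_\U$. Feeding these bounded differences into McDiarmid, separately for the two datasets and the two deviation directions and with a union bound over the resulting events, yields the $4C_\ell\sqrt{\log(8/\delta)/2}\,((3-\beta)n_\C^{-1/2}+\beta n_\U^{-1/2})$ term and reduces the problem to bounding the expected suprema $\mathbb{E}[\sup_{g}(\hat{R}_{\mathrm{SC},\ell}(g) - R_{\mathrm{SC},\ell}(g))]$ and its reverse.

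I would then control each expected supremum by standard symmetrization, introducing Rademacher variables and replacing it by (twice) the Rademacher complexity of the corresponding composite function class. The main work, and the step I expect to be the crux, is the contraction argument turning these composite complexities into $\rho_\ell$ times $\mathfrak{R}_{n_\C}(\mathcal{G})$ and $\mathfrak{R}_{n_\U}(\mathcal{G})$ with the correct weights. For the confidence-data class I would split the summand additively into $r\ell(g)$, $-r\ell(-g)$, and $(1-\beta)\ell(-g)$ and use subadditivity of Rademacher complexity over sums of classes, handling each piece by the Ledoux--Talagrand contraction lemma. The subtlety is the confidence weight: the map $v\mapsto r_i\ell(v)$ is Lipschitz only with the sample-dependent constant $r_i\rho_\ell\le\rho_\ell$, so I would invoke the version of the contraction lemma that allows per-coordinate Lipschitz maps (bounding each $r_i$ by $1$), and I would use that $v\mapsto\ell(-v)$ is also $\rho_\ell$-Lipschitz together with the fact that $\{-g:g\in\mathcal{G}\}$ has the same Rademacher complexity as $\mathcal{G}$. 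The three pieces then carry weights $1$, $1$, and $(1-\beta)$, summing to $3-\beta$, while the unlabeled part carries weight $\beta$.

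Finally I would collect constants: the factors of two arising from the two-sided excess-risk decomposition, from symmetrization, and from the contraction step compound into the leading coefficient $16\rho_\ell$ multiplying $(3-\beta)\mathfrak{R}_{n_\C}(\mathcal{G})+\beta\mathfrak{R}_{n_\U}(\mathcal{G})$, which together with the McDiarmid term gives the stated inequality. Consistency of the minimizer then follows since, for a hypothesis class of bounded complexity, both Rademacher terms and the $n^{-1/2}$ term vanish as $n_\C,n_\U\to\infty$, so $\hat{g}$ is asymptotically as good as $g^*$ under $R_{\mathrm{SC},\ell}$.
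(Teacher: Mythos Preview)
Your proposal is correct and follows essentially the same route as the paper (excess-risk decomposition, McDiarmid, symmetrization, Ledoux--Talagrand contraction); the only organizational difference is that the paper splits the confidence-data summand into its three pieces $r\ell(g)$, $r\ell(-g)$, $(1-\beta)\ell(-g)$ \emph{before} applying McDiarmid, giving eight one-sided events and hence the $\log(8/\delta)$, whereas you keep it together for McDiarmid and split only at the contraction step. One bookkeeping remark: your listed ``three factors of two'' multiply to $8\rho_\ell$, not $16\rho_\ell$; the paper picks up the extra factor by crudely bounding the two-sided $\sup_g|\cdot|$ as the sum, rather than the max, of the two one-sided bounds.
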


\begin{proof}
    Note that $\hat{g}$ and $g^*$ are the minimizers of $\hat{R}_{\mathrm{SC},\ell}(g)$ and $R_{\mathrm{SC},\ell}(g)$, respectively.
    Then,
    \begin{align*}
        R_{\mathrm{SC},\ell}(\hat{g}) - R_{\mathrm{SC},\ell}(g^*)
        &= R_{\mathrm{SC},\ell}(\hat{g}) - \hat{R}_{\mathrm{SC},\ell}(\hat{g}) + \hat{R}_{\mathrm{SC},\ell}(\hat{g}) - \hat{R}_{\mathrm{SC},\ell}(g^*) + \hat{R}_{\mathrm{SC},\ell}(g^*) - R_{\mathrm{SC},\ell}(g^*)
        \\
        &\leq \sup_{g \in \mathcal{G}}\left(R_{\mathrm{SC},\ell}(g) - \hat{R}_{\mathrm{SC},\ell}(g)\right) + 0 + \sup_{g \in \mathcal{G}}\left(\hat{R}_{\mathrm{SC},\ell}(g) - R_{\mathrm{SC},\ell}(g)\right)
        \\
        &\leq 2\sup_{g \in \mathcal{G}}\left|\hat{R}_{\mathrm{SC},\ell}(g) - R_{\mathrm{SC},\ell}(g)\right|.
    \end{align*}
    From now on, our goal is to bound the uniform deviation $\sup_{g \in \mathcal{G}}\left|\hat{R}_{\mathrm{SC},\ell}(g) - R_{\mathrm{SC},\ell}(g)\right|$.
    Since
    \begin{align}
        \sup_{g \in \mathcal{G}}&\left|\hat{R}_{\mathrm{SC},\ell}(g) - R_{\mathrm{SC},\ell}(g)\right| \nonumber
        \\
        &\leq \sup_{g \in \mathcal{G}}\left|\frac{1}{n_\C}\sum_{i=1}^{n_\C}\left\{r_i(\ell(g(x_{\C,i})) - \ell(-g(x_{\C,i}))) + (1 - \beta)\ell(-g(x_{\C,i}))\right\} \right. \nonumber \\
        & \hspace{100pt} \left. \phantom{\sum} - \E_{x,r\sim q}\left[r(\ell(g(x)) - \ell(-g(x))) + (1 - \beta)\ell(-g(x))\right]\right| \nonumber \\
        & \quad + \beta\sup_{g \in \mathcal{G}}\left|\frac{1}{n_\U}\sum_{i=1}^{n_\U}\ell(-g(x_{\U,i})) - \E_{x \sim p}\left[\ell(-g(x))\right]\right| \nonumber
        \\
        \leq& \sup_{g \in \mathcal{G}}\left|\frac{1}{n_\C}\sum_{i=1}^{n_\C}r_i\ell(g(x_{\C,i})) - \E_{x,r \sim q}[r\ell(g(x))]\right|
        \nonumber+ \sup_{g \in \mathcal{G}}\left|\frac{1}{n_\C}\sum_{i=1}^{n_\C}r_i\ell(-g(x_{\C,i})) - \E_{x,r \sim q}[r\ell(-g(x))]\right| \nonumber \\
        & + (1 - \beta)\sup_{g \in \mathcal{G}}\left|\frac{1}{n_\C}\sum_{i=1}^{n_\C}\ell(-g(x_{\C,i})) - \E_{x,r \sim q}[\ell(-g(x))]\right|
        + \beta\sup_{g \in \mathcal{G}}\left|\frac{1}{n_\U}\sum_{i=1}^{n_\U}\ell(-g(x_{\U,i})) - \E_{x \sim p}[\ell(-g(x))]\right|
        ,
        \label{eq:supp:2iwil-uniform-deviation-bound}
    \end{align}
    all we need to do is to bound four terms appearing in the RHS independently,
    which can be done by McDiarmid's inequality~\citep{mcdiarmid1989method}.
    For the first term, since $\sum_{i=1}^{n_\C}r_i\ell(g(x_{\C,i})) - \E_{x,r \sim q}[r\ell(g(x))]$ is the bounded difference with a constant $C_L/n_\C$ for every replacement of $x_{\C,i}$,
    McDiarmid's inequality state that
    \begin{align*}
        \Pr\Bigg[&\sup_{g \in \mathcal{G}}\left(\frac{1}{n_\C}\sum_{i=1}^{n_\C}r_i\ell(g(x_{\C,i})) - \E_{x,r \sim q}[r\ell(g(x))]\right) \\
        &-\E\left[\sup_{g \in \mathcal{G}}\left(\frac{1}{n_\C}\sum_{i=1}^{n_\C}r_i\ell(g(x_{\C,i}))- \E_{x,r \sim q}[r\ell(g(x))]\right)\right] \geq \varepsilon\Bigg]
        \leq \exp\left(-\frac{2\varepsilon^2}{C_L^2/n_\C}\right),
    \end{align*}
    which is equivalent to
    \begin{align*}
        \sup_{g \in \mathcal{G}}&\left(\frac{1}{n_\C}\sum_{i=1}^{n_\C}r_i\ell(g(x_{\C,i})) - \E_{x,r \sim q}[r\ell(g(x))]\right) \\
        &\qquad\leq \E\left[\sup_{g \in \mathcal{G}}\left(\frac{1}{n_\C}\sum_{i=1}^{n_\C}r_i\ell(g(x_{\C,i})) - \E_{x,r \sim q}[r\ell(g(x))]\right)\right]
        + C_L\sqrt{\frac{\log(8/\delta)}{2n_\C}},
    \end{align*}
    with probability at least $1 - \delta/8$.
    Following the symmetrization device (Lemma 6.3 in~\citet{ledoux1991probability}) and Ledoux-Talagrand's contraction inequality (Theorem 4.12 in~\citet{ledoux1991probability}),
    we obtain
    \begin{align*}
        \E\left[\sup_{g \in \mathcal{G}}\left(\frac{1}{n_\C}\sum_{i=1}^{n_\C}r_i\ell(g(x_{\C,i})) - \E_{x,r \sim q}[r\ell(g(x))]\right)\right]
        &\leq 2\mathfrak{R}_{n_\C}(\ell \circ \mathcal{G})
        && \text{(symmetrization)}
        \\
        &\leq 4\rho_L\mathfrak{R}_{n_\C}(\mathcal{G})
        && \text{(contraction)}
        .
    \end{align*}
    Note that $0 \leq r_i \leq 1$ for $i = 1, \dots, n_\C$.
    Thus, one-sided uniform deviation bound is obtained: with probability at least $1 - \delta/8$,
    \begin{align*}
        \sup_{g \in \mathcal{G}}\left(\frac{1}{n_\C}\sum_{i=1}^{n_\C}r_i\ell(g(x_{\C,i})) - \E_{x,r \sim q}[r\ell(g(x))]\right) \leq 4\rho_L\mathfrak{R}_{n_\C}(\mathcal{G}) + C_L\sqrt{\frac{\log(8/\delta)}{2n_\C}}.
    \end{align*}
    Applying it twice, the two-sided uniform deviation bound is obtained: with probability at least $1 - \delta/4$,
    \begin{align*}
        \sup_{g \in \mathcal{G}}\left|\frac{1}{n_\C}\sum_{i=1}^{n_\C}r_i\ell(g(x_{\C,i})) - \E_{x,r \sim q}[r\ell(g(x))]\right| \leq 8\rho_L\mathfrak{R}_{n_\C}(\mathcal{G}) + 2C_L\sqrt{\frac{\log(8/\delta)}{2n_\C}}.
    \end{align*}
    Similarly, the remaining three terms in the RHS of Eq.~\eqref{eq:supp:2iwil-uniform-deviation-bound} can be bounded.
    Since the second, third, and fourth terms are the bounded differences with constants $C_L/n_\C$, $C_L/n_\C$, and $C_L/n_\U$, respectively,
    the following inequalities hold with probability at least $1 - \delta/4$:
    \begin{align*}
        \sup_{g \in \mathcal{G}}\left|\frac{1}{n_\C}\sum_{i=1}^{n_\C}r_i\ell(-g(x_{\C,i})) - \E_{x,r \sim q}[r\ell(-g(x))]\right| &\leq 8\rho_L\mathfrak{R}_{n_\C}(\mathcal{G}) + 2C_L\sqrt{\frac{\log(8/\delta)}{2n_\C}},
        \\
        \sup_{g \in \mathcal{G}}\left|\frac{1}{n_\C}\sum_{i=1}^{n_\C}\ell(-g(x_{\C,i})) - \E_{x,r \sim q}[\ell(-g(x))]\right| &\leq 8\rho_L\mathfrak{R}_{n_\C}(\mathcal{G}) + 2C_L\sqrt{\frac{\log(8/\delta)}{2n_\C}},
        \\
        \sup_{g \in \mathcal{G}}\left|\frac{1}{n_\U}\sum_{i=1}^{n_\U}\ell(-g(x_{\U,i})) - \E_{x \sim p}[\ell(-g(x))]\right| &\leq 8\rho_L\mathfrak{R}_{n_\U}(\mathcal{G}) + 2C_L\sqrt{\frac{\log(8/\delta)}{2n_\U}}.
    \end{align*}
    After all, we can bound the original estimation error: with probability at least $1 - \delta$,
    \begin{align*}
        R_{\mathrm{SC},\ell}(\hat{g}) - R_{\mathrm{SC},\ell}(g^*)
        \leq& 16\rho_L((3-\beta)\mathfrak{R}_{n_\C}(\mathcal{G}) + \beta\mathfrak{R}_{n_\U}(\mathcal{G})) + 4C_L\sqrt{\frac{\log(8/\delta)}{2}}\left((3 - \beta)n_\C^{-\frac{1}{2}} + \beta n_\U^{-\frac{1}{2}}\right).
    \end{align*}
\end{proof}

\section{Proof for IC-GAIL}


\subsection{Proof of Theorem~\ref{theorem:gan}}
\begin{theorem*}
Denote that
\begin{align*}
    V(\pi_\theta,D_w)=\mathbb{E}_{x\sim p}[\log (1-D_w(x))]+\mathbb{E}_{x\sim p'}[\log D_w(x)],
\end{align*}
and that $C(\pi_\theta)=\max_wV(\pi_\theta,D_w)$.
Then, $V(\pi_\theta, D_w)$ is maximized when $D_w=\frac{p'}{p+p'}(\triangleq D_w^*)$,
and its maximum value is $C(\pi_\theta)=-\log 4+2\mathrm{JSD}(p\|p')$.
Thus, $C(\pi_\theta)$ is minimized if and only if $p_\theta=p_\mathrm{opt}$ almost everywhere.
\end{theorem*}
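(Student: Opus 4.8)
The plan is to follow the classical analysis of the Jensen--Shannon GAN objective due to \citet{goodfellow2014generative}, adapting it to the mixture structure $p = \alpha p_\mathrm{opt} + (1-\alpha)p_\mathrm{non}$ and $p' = \alpha p_\theta + (1-\alpha)p_\mathrm{non}$. First I would write the objective as a single integral,
\begin{align*}
V(\pi_\theta, D_w) = \int \left[ p(x)\log(1 - D_w(x)) + p'(x)\log D_w(x) \right] dx,
\end{align*}
and observe that the integrand can be maximized pointwise. For fixed $x$ with $p(x), p'(x) > 0$, the scalar map $y \mapsto p(x)\log(1-y) + p'(x)\log y$ on $(0,1)$ is strictly concave with a unique stationary point at $y = p'(x)/(p(x)+p'(x))$, obtained by setting the derivative to zero. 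Since the discriminator ranges over functions of $x$ with no coupling across distinct $x$, the pointwise maximizer assembles into the global maximizer $D_w^* = p'/(p+p')$, establishing the first claim.

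Next I would substitute $D_w^*$ back into $V$ to compute $C(\pi_\theta)$, yielding
\begin{align*}
C(\pi_\theta) = \mathbb{E}_{x\sim p}\left[\log \frac{p}{p+p'}\right] + \mathbb{E}_{x\sim p'}\left[\log \frac{p'}{p+p'}\right].
\end{align*}
The key step is to insert a factor of $2$ inside each logarithm so that each expectation becomes a Kullback--Leibler divergence against the mixture $M = (p+p')/2$: writing $\log \frac{p}{p+p'} = \log \frac{p}{M} - \log 2$ and similarly for the second term, I obtain $C(\pi_\theta) = -\log 4 + \mathrm{KL}(p\|M) + \mathrm{KL}(p'\|M) = -\log 4 + 2\,\mathrm{JSD}(p\|p')$ by the definition of the Jensen--Shannon divergence, which proves the second claim.

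Finally, for the minimization condition I would invoke the two standard facts that $\mathrm{JSD}(p\|p') \geq 0$ and that equality holds if and only if $p = p'$ almost everywhere. Hence $C(\pi_\theta) \geq -\log 4$ with equality exactly when $p = p'$ a.e. The remaining work is to translate $p = p'$ into the claimed statement about $p_\theta$: since both mixtures share the common term $(1-\alpha)p_\mathrm{non}$, the equality $\alpha p_\mathrm{opt} + (1-\alpha)p_\mathrm{non} = \alpha p_\theta + (1-\alpha)p_\mathrm{non}$ reduces to $\alpha(p_\mathrm{opt} - p_\theta) = 0$, and because $\alpha = \Pr(y=+1) > 0$ this is equivalent to $p_\theta = p_\mathrm{opt}$ almost everywhere. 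I expect the main obstacle to be precisely this final reduction: it is algebraically trivial but depends crucially on $\alpha > 0$ (so that the optimal component does not vanish) and on the fact that the non-optimal component is identical in $p$ and $p'$ by construction; without the shared $p_\mathrm{non}$ term the equivalence would fail.
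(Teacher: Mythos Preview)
Your proposal is correct and follows essentially the same route as the paper's proof: pointwise maximization of the integrand to identify $D_w^*$ (the paper simply cites Proposition~1 of \citet{goodfellow2014generative} here), substitution to obtain the $-\log 4 + 2\,\mathrm{JSD}(p\|p')$ expression, and then cancellation of the shared $(1-\alpha)p_\mathrm{non}$ term to reduce $p=p'$ to $p_\theta = p_\mathrm{opt}$. Your explicit remark that the final step requires $\alpha > 0$ is a detail the paper leaves implicit.
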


\begin{proof}
Given a fixed agent policy $\pi_\theta$, the discriminator maximize the quantity $V(\pi_\theta,D_w)$, which can be rewritten in the same way we did in Eq.~\eqref{eq:supp:icgail-equivalence}, such as
\begin{align*}
    V(\pi_\theta, D_w)
    &= \mathbb{E}_{x \sim p}[\log(1 - D_w(x))]+\mathbb{E}_{x \sim p'}[\log D_w(x)]\\
    &=\int p'(x)\log D_w(x) + p(x)\log(1 - D_w(x))dx
    .
\end{align*}
This maximum is achieved when $D_w(x)=D_{w^*}(x)=\frac{p'(x)}{p'(x)+p(x)}$,
with the same discussion as Proposition~1 in \citet{goodfellow2014generative}. As a result, we may derive $\max_wV(\pi_\theta, D_w)$ with $D_w^*(x)$,
\begin{align*}
    C(\pi_\theta)
    =V(\pi_\theta,D_w^*)
    =\mathbb{E}_{x \sim p}\left[\log\frac{p}{p'+p}\right]+\mathbb{E}_{x \sim p'}\left[\log\frac{p'}{p'+p}\right]
    ,
\end{align*}
where $p'=\alpha p_\theta+(1-\alpha)p_\mathrm{non}$.
Note that $C({\pi_\theta})=\mathbb{E}_{x \sim p}[\log\frac{1}{2}]+\mathbb{E}_{x \sim p'}[\log\frac{1}{2}]=-\log4$ when $p'=p$.
We may rewrite $C({\pi_\theta})$ as follows:
\begin{align*}
    C({\pi_\theta})=&\mathbb{E}_{x\sim p}\left[\log\frac{p}{p'+p}\right]+\mathbb{E}_{x\sim p'}\left[\log\frac{p'}{p'+p}\right]\\
    =&-\log 4+\mathbb{E}_{x\sim p}\left[\log\frac{p'}{(p'+p)/2}\right]+\mathbb{E}_{x\sim p'}\left[\log\frac{p}{(p'+p)/2}\right]\\
    =&-\log 4+2\mathrm{JSD}(p\| p'),
\end{align*}
where $\mathrm{JSD}(p_1\|p_2) \triangleq \frac{1}{2}\mathbb{E}_{p_1}[\log\frac{p_1}{(p_1 + p_2) / 2}] + \frac{1}{2}\mathbb{E}_{p_2}[\log\frac{p_2}{(p_1 + p_2) / 2}]$ is Jensen-Shannon divergence.
Since Jensen-Shannon divergence is greater or equal to zero and it is minimized and only if $p'=p$, we obtain that $C({\pi_\theta})$ is minimized if and only if
\begin{align*}
    p'=p
    \Rightarrow \; &\alpha p_\theta+(1-\alpha)p_\mathrm{non}=\alpha  p_\mathrm{opt}+(1-\alpha)p_\mathrm{non} \text{ almost everywhere}\\
    \Rightarrow \; & p_\theta=p_\mathrm{opt}\text{ almost everywhere}.
\end{align*}
\end{proof}

\subsection{Proof of Theorem \ref{theorem:transform}}
\begin{theorem*}
$V(\pi_\theta, D_w)$ can be transformed to $\tilde{V}(\pi_\theta, D_w)$, which is defined as follows:
\begin{align*}
    \tilde{V}(\pi_\theta, D_w) = \mathbb{E}_{x\sim p}[\log (1-D_w(x))]+\alpha\mathbb{E}_{x\sim p_\theta}[\log D_w(x)]+\mathbb{E}_{x,r\sim q}[(1-r)\log D_w(x)].
\end{align*}
\end{theorem*}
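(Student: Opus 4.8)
The plan is to prove the stronger statement that $V(\pi_\theta, D_w) = \tilde{V}(\pi_\theta, D_w)$ as an identity, since both expressions share the identical first term $\mathbb{E}_{x\sim p}[\log(1-D_w(x))]$. It therefore suffices to show that the single term $\mathbb{E}_{x\sim p'}[\log D_w(x)]$ in $V$ equals the two terms $\alpha\mathbb{E}_{x\sim p_\theta}[\log D_w(x)]+\mathbb{E}_{x,r\sim q}[(1-r)\log D_w(x)]$ appearing in $\tilde{V}$.

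First I would unfold the definition $p' = \alpha p_\theta + (1-\alpha)p_{\mathrm{non}}$ and use linearity of the integral to split
\begin{align*}
    \mathbb{E}_{x\sim p'}[\log D_w(x)]
    = \alpha\,\mathbb{E}_{x\sim p_\theta}[\log D_w(x)] + (1-\alpha)\,\mathbb{E}_{x\sim p_\mathrm{non}}[\log D_w(x)].
\end{align*}
The first summand already matches the desired agent term in $\tilde{V}$, so the entire burden reduces to rewriting the second summand $(1-\alpha)\,\mathbb{E}_{x\sim p_\mathrm{non}}[\log D_w(x)]$ into the $q$-expectation. The obstacle here is conceptual rather than computational: $p_\mathrm{non}$ is unknown and cannot be sampled, so the term is useless as written; the whole point of the transformation is to eliminate it in favor of quantities we can estimate from $\mathcal{D}_\U$ and $\mathcal{D}_\C$.

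To remove $p_\mathrm{non}$ I would invoke the Bayes'-rule identity already established in Eq.~\eqref{eq:bayes_neg}, namely $p(x\vert y=-1) = \frac{(1-r(x))p(x)}{1-\alpha}$, which (recalling $p_\mathrm{non}(x) = p(x\vert y=-1)$) gives the clean relation $(1-\alpha)p_\mathrm{non}(x) = (1-r(x))p(x)$. Substituting this turns the term into $\int (1-r(x))\log D_w(x)\, p(x)\, dx$, an integral against the mixture $p$ weighted by $(1-r(x))$. The final step is to recognize this as an expectation over $q$: since $q(x,r) = p(x)p_\mathrm{r}(r\vert x)$ with $p_\mathrm{r}(r\vert x) = \delta(r - r(x))$, integrating out the delta distribution yields
\begin{align*}
    \mathbb{E}_{x,r\sim q}[(1-r)\log D_w(x)]
    = \int\!\!\int (1-r)\log D_w(x)\, p(x)\,\delta(r - r(x))\, dr\, dx
    = \int (1-r(x))\log D_w(x)\, p(x)\, dx,
\end{align*}
which coincides exactly with the rewritten term. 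Collecting the pieces establishes $V = \tilde{V}$. I expect no genuine difficulty beyond correctly applying the negative-class Bayes identity; the delta-distribution manipulation is routine, and everything else is linearity.
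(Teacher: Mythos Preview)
Your proposal is correct and follows essentially the same approach as the paper: split $\mathbb{E}_{x\sim p'}[\log D_w(x)]$ via $p' = \alpha p_\theta + (1-\alpha)p_{\mathrm{non}}$, apply the negative-class Bayes identity $(1-\alpha)p_{\mathrm{non}}(x) = (1-r(x))p(x)$, and recognize the resulting $p$-integral as the $q$-expectation. The only cosmetic difference is that you spell out the delta-distribution step explicitly, whereas the paper leaves it implicit in the passage from $\int(1-r(x))\log D_w(x)p(x)\,dx$ to $\mathbb{E}_{x,r\sim q}[(1-r)\log D_w(x)]$.
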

\begin{proof}
The statement can be confirmed as follows:
\begin{align}
    \mathbb{E}_{x\sim p}&[\log (1-D_w(x))]+\mathbb{E}_{x\sim p'}[\log D_w(x)] \nonumber
    \\
    &=\mathbb{E}_{x\sim p}[\log (1-D_w(x))]+\alpha\mathbb{E}_{x\sim p_\theta}[\log D_w(x)]+(1-\alpha)\mathbb{E}_{x\sim p_\mathrm{non}}[\log D_w(x)] \nonumber
    \\
    &=\mathbb{E}_{x\sim p}[\log (1-D_w(x))]+\alpha\mathbb{E}_{x\sim p_\theta}[\log D_w(x)]+(1-\alpha)\mathbb{E}_{x,r\sim q}\left[\frac{1-r}{1-\alpha}\log D_w(x)\right] \nonumber
    \\
    &=\mathbb{E}_{x\sim p}[\log (1-D_w(x))]+\alpha\mathbb{E}_{x\sim p_\theta}[\log D_w(x)]+\mathbb{E}_{x,r\sim q}[(1-r)\log D_w(x)]
    ,
    \label{eq:supp:icgail-equivalence}
\end{align}
where the first identity comes from the definition $p' = \alpha p_\theta + (1 - \alpha)p_{\mathrm{non}}$,
and the second identity holds since
\begin{align*}
    \mathbb{E}_{x \sim p_{\mathrm{non}}}[\log D_w(x)]
    &= \int \log D_w(x) p_{\mathrm{non}}(x) dx
    \\
    &= \int \log D_w(x) \frac{1 - r(x)}{1 - \alpha} p(x) dx
    && \text{(note $p_{\mathrm{non}}(x) = p(x \vert y = -1)$)}
    \\
    &= \int \log D_w(x) \frac{1 - r}{1 - \alpha} q(x, r) dx dr
    \\
    &= \mathbb{E}_{x, r \sim q}\left[\frac{1 - r}{1 - \alpha}\log D_w(x)\right]
    .
\end{align*}
\end{proof}

\subsection{Proof of Theorem~\ref{theorem:icgail-bound}}
\begin{theorem*}
    Let $\mathcal{W}$ be a parameter space for training the discriminator and $D_\mathcal{W} \triangleq \{D_w \mid w \in \mathcal{W}\}$ be its hypothesis space.
    Assume that $$\max\{\sup_{x \in \mathcal{X}, w \in \mathcal{W}}|\log D_w(x)|, \sup_{x \in \mathcal{X}, w \in \mathcal{W}}|\log(1-D_w(x))|\} \leq C_L$$,
    and that $\max\{\sup_{w \in \mathcal{W}}|\log D_w(x) - \log D_w(x')|, \sup_{w \in \mathcal{W}}|\log(1-D_w(x)) - \log(1-D_w(x'))|\} \leq \rho_L|x - x'|$ for any $x, x' \in \mathcal{X}$.
    For a fixed agent policy ${\pi_\theta}$, let $D_{\hat{w}} \triangleq \argmax_{w \in \mathcal{W}}\hat{V}({\pi_\theta}, D_w)$ and $D_{w^*} \triangleq \argmax_{w \in \mathcal{W}}V({\pi_\theta}, D_w)$.
    For $\delta \in (0, 1)$, with probability at least $1 - \delta$ over repeated sampling of data for training $D_{\hat{w}}$,
    \begin{align*}
        V({\pi_\theta}, D_{w^*}) - V({\pi_\theta}, D_{\hat{w}})
        \leq& 16\rho_L(\mathfrak{R}_{n_\U}(D_\mathcal{W}) + \alpha\mathfrak{R}_{n_\A}(D_\mathcal{W}) + \mathfrak{R}_{n_\C}(D_\mathcal{W}))
        \\&+ 4C_L\sqrt{\frac{\log(6/\delta)}{2}}\left(n_\U^{-\frac{1}{2}} + \alpha n_\A^{-\frac{1}{2}} + n_\C^{-\frac{1}{2}}\right).
    \end{align*}
\end{theorem*}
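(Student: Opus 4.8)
The plan is to mirror the argument used for Theorem~\ref{theorem:2iwil-bound}, exploiting the identity $V(\pi_\theta, D_w) = \tilde{V}(\pi_\theta, D_w)$ established in Theorem~\ref{theorem:transform} so that the empirical estimate $\hat{V}$ targets $V$ directly. First I would reduce the one-sided estimation error to a uniform deviation. Using that $D_{\hat{w}}$ maximizes $\hat{V}$ while $D_{w^*}$ maximizes $V$, the middle bracket below is nonpositive, so
\begin{align*}
V(\pi_\theta, D_{w^*}) - V(\pi_\theta, D_{\hat{w}})
&= [V(\pi_\theta,D_{w^*}) - \hat{V}(\pi_\theta,D_{w^*})] + [\hat{V}(\pi_\theta,D_{w^*}) - \hat{V}(\pi_\theta,D_{\hat{w}})] \\
&\quad + [\hat{V}(\pi_\theta,D_{\hat{w}}) - V(\pi_\theta,D_{\hat{w}})]
\leq 2\sup_{w \in \mathcal{W}}\bigl|\hat{V}(\pi_\theta, D_w) - V(\pi_\theta, D_w)\bigr|,
\end{align*}
and it suffices to control the uniform deviation on the right.

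Next I would split the uniform deviation along the three summands of $\tilde{V}$: the unlabeled term $\mathbb{E}_{x\sim p}[\log(1-D_w(x))]$ estimated from $\mathcal{D}_\U$, the agent term $\alpha\mathbb{E}_{x\sim p_\theta}[\log D_w(x)]$ estimated from $\mathcal{D}_\A$, and the confidence term $\mathbb{E}_{x,r\sim q}[(1-r)\log D_w(x)]$ estimated from $\mathcal{D}_\C$. By the triangle inequality the deviation is bounded by three per-source deviations with coefficients $1$, $\alpha$, and $1$. For each I would invoke McDiarmid's inequality: replacing one sample changes the relevant empirical average by at most $C_L/n$ (the weight $1-r_i\in[0,1]$ in the confidence term does not inflate this since $|\log D_w|\leq C_L$), which, allocating $\delta/6$ to each of the six one-sided events, yields a concentration term $C_L\sqrt{\log(6/\delta)/(2n)}$. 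The expected suprema are then treated by the symmetrization lemma (Lemma~6.3 of \citet{ledoux1991probability}) followed by the Ledoux--Talagrand contraction inequality (Theorem~4.12), exactly as in the proof of Theorem~\ref{theorem:2iwil-bound}.

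Assembling the pieces, each per-source two-sided deviation is at most $8\rho_L\mathfrak{R}_n(D_\mathcal{W}) + 2C_L\sqrt{\log(6/\delta)/(2n)}$, where the factor $8 = 2\times 2\times 2$ collects the two-sided, symmetrization, and contraction contributions. Summing over the three sources with their coefficients $1,\alpha,1$ and multiplying by the outer factor $2$ from the reduction above produces exactly $16\rho_L(\mathfrak{R}_{n_\U}(D_\mathcal{W}) + \alpha\mathfrak{R}_{n_\A}(D_\mathcal{W}) + \mathfrak{R}_{n_\C}(D_\mathcal{W}))$ together with $4C_L\sqrt{\log(6/\delta)/2}\,(n_\U^{-1/2} + \alpha n_\A^{-1/2} + n_\C^{-1/2})$, and a union bound over the six events gives overall probability at least $1-\delta$.

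The main obstacle, and the only place where the IC-GAIL assumptions genuinely enter, is the contraction step: I must relate $\mathfrak{R}_n(\{\log D_w\})$ and $\mathfrak{R}_n(\{\log(1-D_w)\})$ back to $\mathfrak{R}_n(D_\mathcal{W})$. The boundedness $|\log D_w|,|\log(1-D_w)|\leq C_L$ confines $D_w(x)$ to $[e^{-C_L},1-e^{-C_L}]$, on which $z\mapsto\log z$ and $z\mapsto\log(1-z)$ are Lipschitz, and the stated Lipschitz norm $\rho_L$ is precisely what licenses applying Ledoux--Talagrand to the data-dependent maps $z\mapsto(1-r_i)\log z$, whose Lipschitz constants are bounded by $\rho_L$ since $1-r_i\leq 1$. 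The remaining bookkeeping—independence of the three data sources, so that McDiarmid applies separately and the union bound is valid, and treating $\{x_{\A,i}\}$ as fixed i.i.d.\ draws from $p_\theta$ for the fixed policy $\pi_\theta$—is routine under the sampling assumptions.
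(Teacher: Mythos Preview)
Your proposal is correct and follows essentially the same route as the paper's proof: reduce the estimation error to twice a uniform deviation, split the deviation into the three per-source terms with coefficients $1,\alpha,1$, bound each two-sided deviation via McDiarmid (bounded-difference constant $C_L/n$, using $0\le 1-r_i\le 1$ for the confidence term) together with symmetrization and Ledoux--Talagrand contraction, and union-bound over the six one-sided events with $\delta/6$ each. Your extra paragraph on the contraction step is more explicit than the paper (which simply writes $\mathfrak{R}_n(\log\circ D_\mathcal{W})\le 2\rho_L\mathfrak{R}_n(D_\mathcal{W})$ without comment), but the argument and the resulting constants are identical.
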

\begin{proof}
    Denote $\mathcal{V}(w) \triangleq V({\pi_\theta}, D_w)$ and $\hat{\mathcal{V}}(w) \triangleq \hat{V}({\pi_\theta}, D_w)$.
    Note that $\hat{w}$ and $w^*$ are the minimizers of $\mathcal{V}(w)$ and $\hat{\mathcal{V}}(w)$, respectively.
    Then,
    \begin{align*}
        \mathcal{V}(w^*) - \mathcal{V}(\hat{w})
        &= \mathcal{V}(w^*) - \hat{\mathcal{V}}(w^*)
        + \hat{\mathcal{V}}(w^*) - \hat{\mathcal{V}}(\hat{w})
        + \hat{\mathcal{V}}(\hat{w}) - \mathcal{V}(\hat{w})
        \\
        &\le \sup_{w \in \mathcal{W}}\left(\mathcal{V}(w) - \hat{\mathcal{V}}(w)\right) + 0 + \sup_{w \in \mathcal{W}}\left(\hat{\mathcal{V}}(w) - \mathcal{V}(w)\right)
        \\
        &\le 2\sup_{w \in \mathcal{W}}\left|\hat{\mathcal{V}}(w) - \mathcal{V}(w)\right|.
    \end{align*}
    From now on, our goal is to bound the uniform deviation $\sup_{w \in \mathcal{W}}\left|\hat{\mathcal{V}}(w) - \mathcal{V}(w)\right|$.
    Since
    \begin{align}
        \sup_{w \in \mathcal{W}}\left|\hat{\mathcal{V}}(w) - \mathcal{V}(w)\right|
        &\le \sup_{w \in \mathcal{W}}\left|\frac{1}{n_\U}\sum_{i=1}^{n_\U}\log(1 - D_w(x_{\U,i})) - \mathbb{E}_{x \sim p}\left[\log(1 - D_w(x))\right]\right| \nonumber \\
        &\quad + \alpha \sup_{w \in \mathcal{W}}\left|\frac{1}{n_\A}\sum_{i=1}^{n_\A}\log D_w(x_{\A,i}) - \mathbb{E}_{x \sim p_\theta}\left[\log D_w(x)\right]\right| \nonumber \\
        &\quad +\sup_{w \in \mathcal{W}}\left|\frac{1}{n_\C}\sum_{i=1}^{n_\C}(1-r_i)\log D_w(x_{\C,i}) - \mathbb{E}_{x,r \sim q}\left[(1-r) \log D_w(x)\right]\right|,
        \label{eq:decomposed-uniform-deviation}
    \end{align}
    three terms appearing in the RHS must be bounded independently, utilizing McDiarmid's inequality~\citep{mcdiarmid1989method}.
    For the first term, since $\sum_{i=1}^{n_\U} \log(1 - D_w(x_{\U,i})) - \mathbb{E}_{x \sim p}[\log(1 -D_w(x))]$ has the bounded difference property with a constant $C_L/n_\U$ for every replacement of $x_{\U,i}$,
    we can conclude by McDiarmid's inequality that
    \begin{align*}
        \mathrm{Pr}&\left[\sup_{w \in \mathcal{W}} \left(\sum_{i=1}^{n_\U}\log(1 - D_w(x_{\U,i})) - \mathbb{E}_{x \sim p}[\log(1 - D_w(x))]\right)\right. \\
        &\qquad \left.- \E\left[\sup_{w \in \mathcal{W}} \sum_{i=1}^{n_\U}\log(1 - D_w(x_{\U,i})) - \mathbb{E}_{x \sim p}[\log(1 - D_w(x))]\right] \geq \varepsilon\right] \leq \exp\left(-\frac{2\varepsilon^2}{C_L^2 / n_\U}\right),
    \end{align*}
    which is equivalent to
    \begin{align*}
        \sup_{w \in \mathcal{W}}&\left(\sum_{i=1}^{n_\U}\log(1 - D_w(x_{\U,i})) - \mathbb{E}_{x \sim p}[\log(1 - D_w(x))]\right) \\
        &\leq \mathbb{E}\left[\sup_{w \in \mathcal{W}} \sum_{i=1}^{n_\U}\log(1 - D_w(x_{\U,i})) - \mathbb{E}_{x \sim p}[\log(1 - D_w(x))]\right] + C_L\sqrt{\frac{\log(6/\delta)}{2n_\U}},
    \end{align*}
    with probability at least $1 - \delta/6$.
    Following symmetrization device (Lemma~6.3 in \citet{ledoux1991probability}) and Ledoux-Talagrand's contraction inequality (Theorem~4.12 in \citet{ledoux1991probability}), we obtain
    \begin{align*}
        \mathbb{E}\left[\sup_{w \in \mathcal{W}} \sum_{i=1}^{n_\U}\log(1 - D_w(x_{\U,i})) - \mathbb{E}_{x \sim p}[\log(1 - D_w(x))]\right]
        &\leq 2\mathfrak{R}_{n_\U}(\log\circ D_\mathcal{W})
        && \text{(symmetrization)}
        \\
        &\leq 4\rho_L\mathfrak{R}_{n_\U}(D_\mathcal{W}).
        && \text{(contraction inequality)}
    \end{align*}
    Thus, one-sided uniform deviation bound is obtained: with probability at least $1 - \delta / 6$,
    \begin{align*}
        \sup_{w \in \mathcal{W}}\left(\sum_{i=1}^{n_\U}\log(1 - D_w(x_{\U,i})) - \mathbb{E}_{x \sim p}[\log(1 - D_w(x))]\right)
        \leq 4\rho_L\mathfrak{R}_{n_\U}(D_\mathcal{W}) + C_L\sqrt{\frac{\log(6/\delta)}{2n_\U}}.
    \end{align*}
    Applying it twice, the two-sided uniform deviation bound is obtained: with probability at least $1 - \delta / 3$,
    \begin{align*}
        \sup_{w \in \mathcal{W}}\left|\sum_{i=1}^{n_\U}\log(1 - D_w(x_{\U,i})) - \mathbb{E}_{x \sim p}[\log(1 -  D_w(x))]\right|
        \leq 8\rho_L\mathfrak{R}_{n_\U}(D_\mathcal{W}) + 2C_L\sqrt{\frac{\log(6/\delta)}{2n_\U}}.
    \end{align*}
    Similarly, the second and third terms on the RHS of Eq.~\eqref{eq:decomposed-uniform-deviation} can be bounded.
    Since they have the bounded difference property with constants $C_L/n_\A$ and $C_L/n_\C$, respectively (note that $|1 - r(x)| \leq 1$ for any $x$),
    both of the following inequalities hold independently with probability at least $1 - \delta / 3$:
    \begin{align*}
        \sup_{w \in \mathcal{W}}&\left|\sum_{i=1}^{n_\A}\log D_w(x_{\A,i}) - \mathbb{E}_{x \sim p_\theta}[\log D_w(x)]\right|
        \leq 8\rho_L\mathfrak{R}_{n_\A}(D_\mathcal{W}) + 2C_L\sqrt{\frac{\log(6/\delta)}{2n_\A}},
        \\\sup_{w \in \mathcal{W}}&\left|\sum_{i=1}^{n_\C}(1 - r_i)\log D_w(x_{\C,i}) - \mathbb{E}_{x, r \sim q}[(1 - r)\log D_w(x)]\right|
        \leq 8\rho_L\mathfrak{R}_{n_\C}(D_\mathcal{W})+ 2C_L\sqrt{\frac{\log(6/\delta)}{2n_\C}}.
    \end{align*}
    Combining the above all, we can bound the original estimation error:
    the following bound holds with probability at least $1 - \delta$,
    \begin{align*}
        \mathcal{V}(w^*) - \mathcal{V}(\hat{w})
        \leq& 16\rho_L(\mathfrak{R}_{n_\U}(D_\mathcal{W}) + \alpha\mathfrak{R}_{n_\A}(D_\mathcal{W}) + \mathfrak{R}_{n_\C}(D_\mathcal{W}))
        + 4C_L\sqrt{\frac{\log(6/\delta)}{2}}\left(n_\U^{-\frac{1}{2}} + \alpha n_\A^{-\frac{1}{2}} + n_\C^{-\frac{1}{2}}\right).
    \end{align*}
\end{proof}

\section{Implementation and Experimental Details}
We use the same neural net architecture and hyper-parameters for all tasks. For the architectures of all neural networks, we use two hidden layers with size $100$ and Tanh as activation functions. Please refer to Table~\ref{table:hyper} for more details. Specification of each tasks is shown in Table~\ref{table:tasks}, where we show the average return of the optimal and the uniformly random policies. The average return is used to normalize the performance in Sec.~\ref{experiments} so that $1.0$ indicates the optimal policy and $0.0$ the random policy. 
\begin{table}[!htbp]
\caption{Hyper-parameters used for all tasks.}
\label{table:hyper}
\vskip 0.15in
\begin{center}
\begin{small}
\begin{sc}
\begin{tabular}{llr}
\toprule
Hyper-parameters & value \\
\midrule
$\gamma$                        & $0.995$\\
$\tau$ (Generalized Advantage Estimation)       & $0.97$ \\
batch size                      & $5,000$\\
learning rate (value network)   & $3\times 10^{-4}$ \\
learning rate (discriminator)   & $1\times 10^{-3}$\\
optimizer                       & Adam\\
loss function (2IWIL)           & logistic loss\\
\bottomrule
\end{tabular}
\end{sc}
\end{small}
\end{center}
\end{table}

\begin{table}[!htbp]
\caption{Specification of each tasks. Optimal policy and random policy columns indicate the average return.}
\label{table:tasks}
\vskip 0.15in
\begin{center}
\begin{small}
\begin{sc}
\begin{tabular}{llcllccr}
\toprule
Tasks & $\mathcal{S}$ & $\mathcal{A}$ & $n_u$ & $n_c$ & optimal policy & random policy\\
\midrule
HalfCheetah-v2  & $\mathbb{R}^{17}$ & $\mathbb{R}^6$& 2000  & 500   & 3467.32   & -288.44\\
Walker-v2       & $\mathbb{R}^{17}$ & $\mathbb{R}^6$& 1600  & 400   & 3694.13   & 1.91\\
Ant-v2          & $\mathbb{R}^{111}$& $\mathbb{R}^8$& 480   & 120   & 4143.10   & -72.30\\
Swimmer-v2      & $\mathbb{R}^{8}$  & $\mathbb{R}^2$& 20    & 5     & 348.99    & 2.31\\
Hopper-v2       & $\mathbb{R}^{11}$ & $\mathbb{R}^3$& 16    & 4     & 3250.67   & 18.04\\
\bottomrule
\end{tabular}
\end{sc}
\end{small}
\end{center}
\end{table}


\subsection{Non-negative risk estimator}
By observing the risk estimator of Eq.~\eqref{eq:weighted}, it is possible that the empirical estimation is negative and this may lead to overfitting~\citep{kiryo2017positive}. 
Since we know that the expected risk is nonnegative, we can borrow the idea from \citet{kiryo2017positive} to mitigate this problem by simply adding the max operator to prevent the empirical risk from becoming negative by first rewriting the empirical risk as
\begin{align}\label{eq:nnrisk-beforemax}
    \Hat{R}_{\mathrm{SC},\ell}(g)&=\Hat{R}_C^+(g)+\Hat{R}_{C,U}^-(g),
\end{align}
where 
\begin{align*}
    \Hat{R}_C^+(g) &=\frac{1}{n_\C}\sum_{i=1}^{n_\C}r(x_{\C,i})\ell(g(x_{\C,i})),\\
\end{align*}
and
\begin{align*}
    \Hat{R}_{C,U}^-(g) &=\frac{1}{n_\C}\sum_{i=1}^{n_\C}(1-\beta-r(x_i))\ell(-g(x_{\C,i}))+\frac{1}{n_\U}\sum_{i=1}^{n_\U}\beta \ell(-g(x_{\U,i})).\\
\end{align*}
Note that $R_{C,U}^-\geq 0$ holds for all $g$. However, it is not the case for $\Hat{R}_{C,U}^-(g)$, which is a potential reason to overfit. 
Based on Eq.~\eqref{eq:nnrisk-beforemax}, we achieve the \emph{non-negative risk estimator} that gives the non-negative empirical risk as follows.

\begin{align}
    \Hat{R}_{\mathrm{SC},\ell}(g)=\Hat{R}_C^+(g)+\max\left\{0,\Hat{R}_{C,U}^-(g)\right\}.
\end{align}

\subsection{Ant-v2 Figures}

\begin{figure}[!htbp]
    \centering
    \includegraphics[scale=0.7]{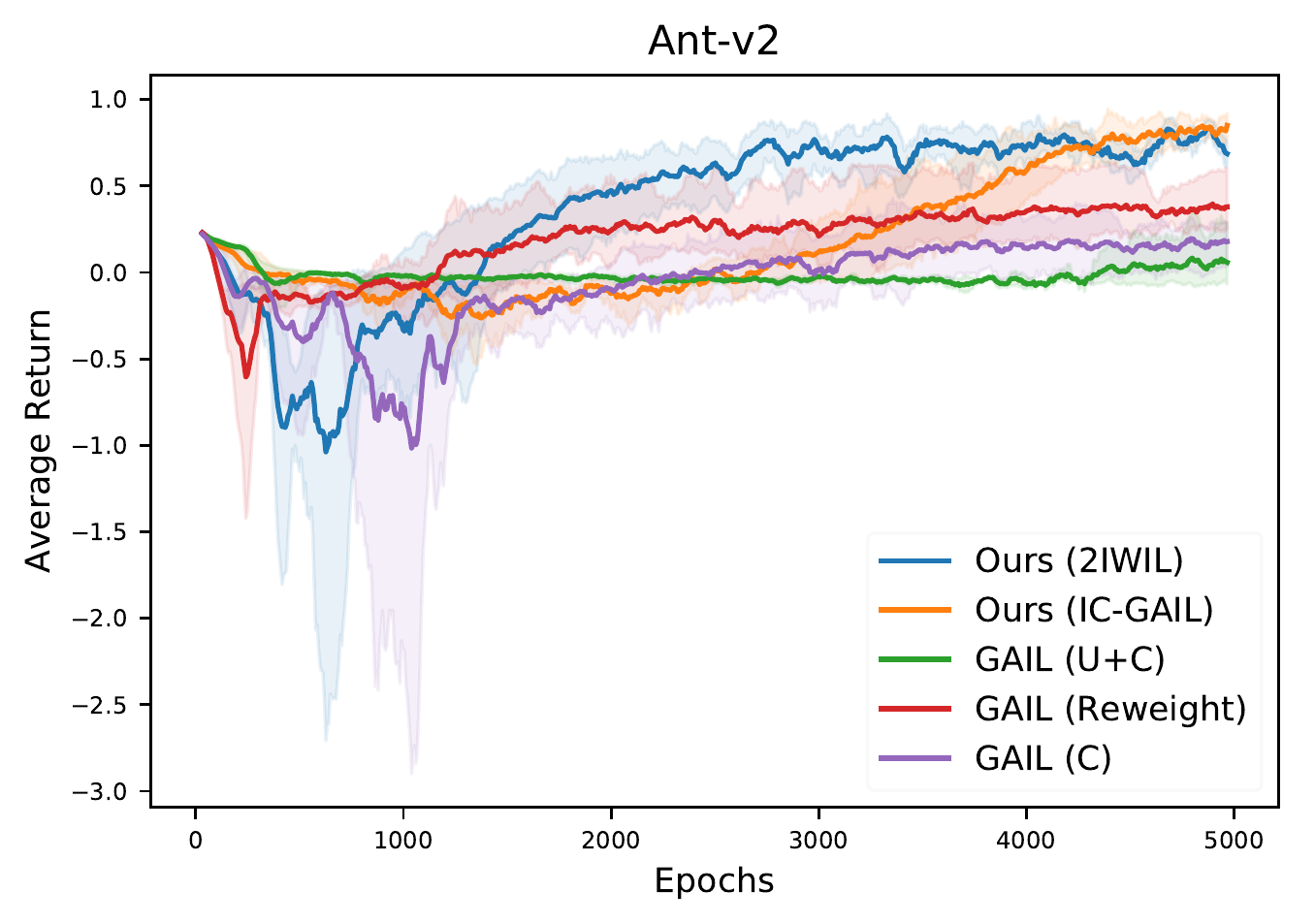}
    \caption{Learning curves of our 2IWIL and IC-GAIL versus baselines.}
    \label{fig:whole_Ant}
\end{figure}
\begin{figure}[!htbp]
    \centering
    \includegraphics
    [scale=0.7]{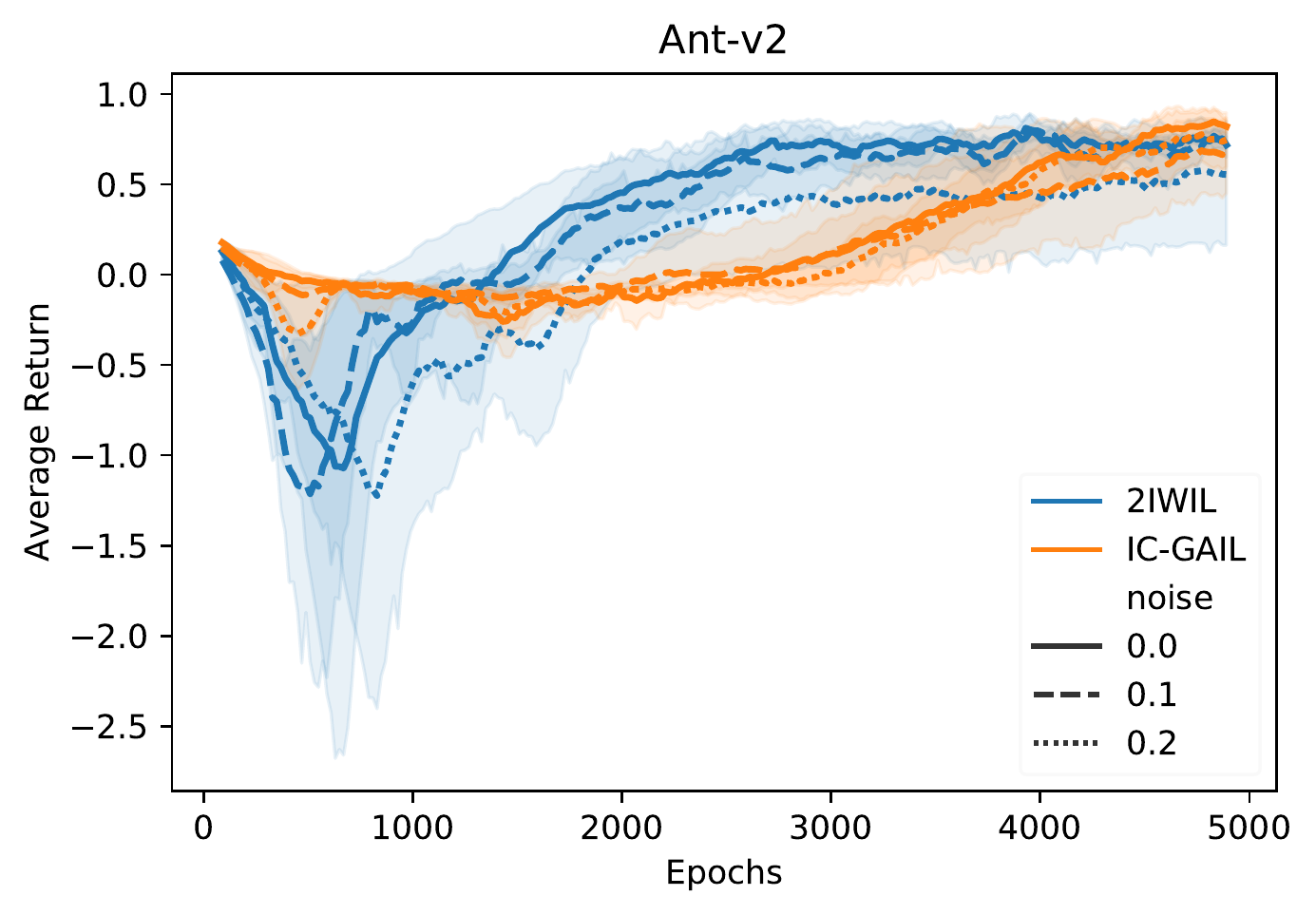}
    \caption{Learning curves of proposed methods with different standard deviations of Gaussian noise added to confidence. The numbers in the legend indicate the standard deviation of the Gaussian noise.}
    \label{fig:whole_noise}
\end{figure}%
\begin{figure}[!htbp]
    \centering
    \includegraphics[scale=0.7]{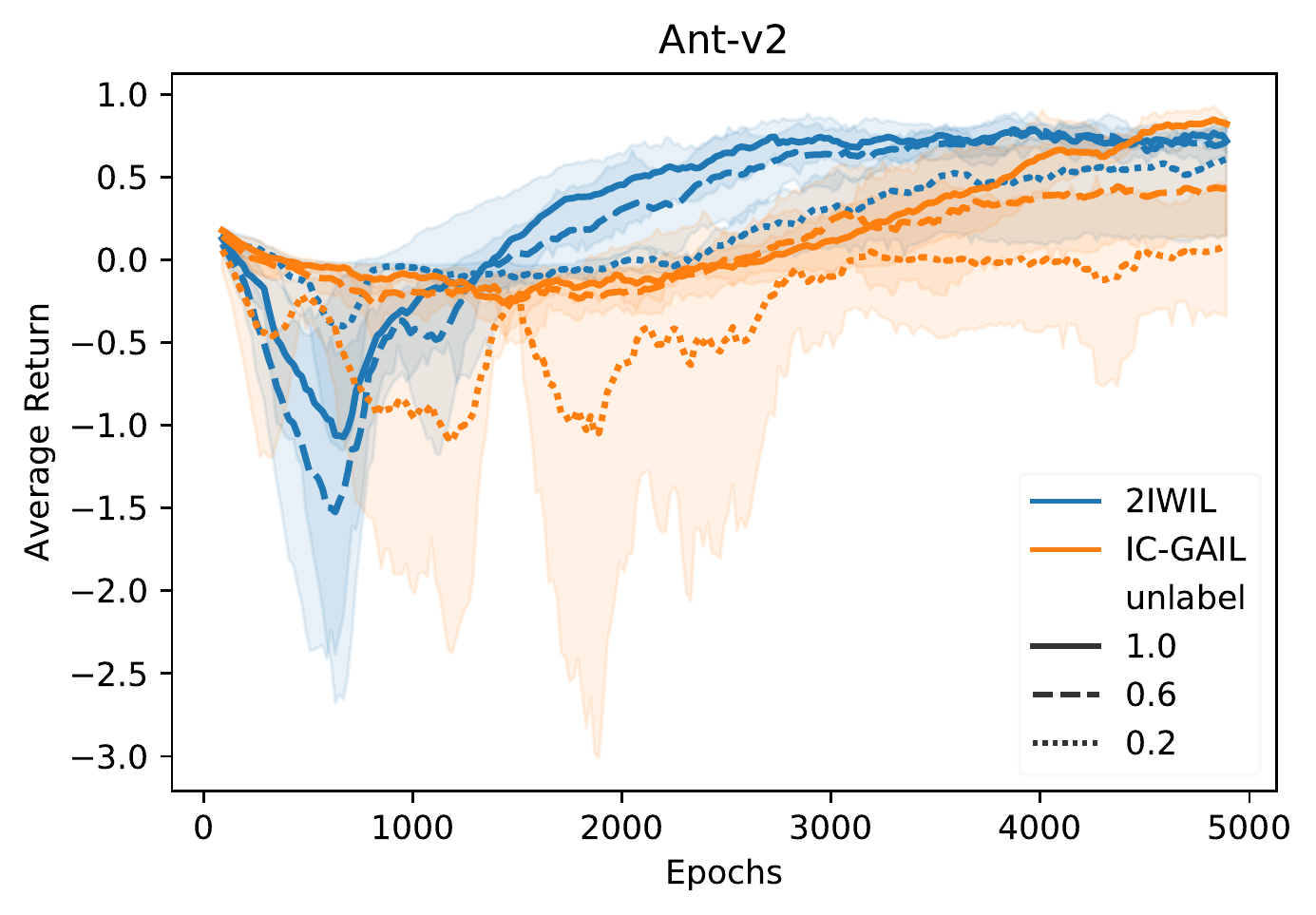}
    \caption{Learning curves of the proposed methods with different number of unlabeled data. The numbers in the legend suggest the proportion of unlabeled data used as demonstrations.}
    \label{fig:whole_unlabel}
\end{figure}

We empirically found that when using GAIL-based approaches in Ant-v2 environment, the performance degrades quickly in early training stages. The uncropped figures are Figs.~\ref{fig:whole_Ant}, \ref{fig:whole_noise} and~\ref{fig:whole_unlabel}.

\end{document}